\newtheorem{prop}{Proposition}
\newtheorem{definition}{Definition}
\newtheorem{lemma}{Lemma}
\newtheorem{corollary}{Corollary}
\newtheorem{prop2}{Proposition}
\newtheorem{lemma2}{Lemma}
\newenvironment{customlemma}[1]
  {\innercustomlemma}
  {\endinnercustomlemma}
\newenvironment{customcor}[1]
  {\innercustomcor}
  {\endinnercustomcor}
\newcommand{\EOC}{ \textrm{EOC}}
\DeclareMathOperator{\Tr}{Tr}
\begin{document}

\twocolumn[
\icmltitle{On the Impact of the Activation Function on Deep Neural Networks Training}

% It is OKAY to include author information, even for blind
% submissions: the style file will automatically remove it for you
% unless you've provided the [accepted] option to the icml2019
% package.

% List of affiliations: The first argument should be a (short)
% identifier you will use later to specify author affiliations
% Academic affiliations should list Department, University, City, Region, Country
% Industry affiliations should list Company, City, Region, Country

% You can specify symbols, otherwise they are numbered in order.
% Ideally, you should not use this facility. Affiliations will be numbered
% in order of appearance and this is the preferred way.
%\icmlsetsymbol{equal}{*}

\begin{icmlauthorlist}
\icmlauthor{Soufiane Hayou}{ox}
\icmlauthor{Arnaud Doucet}{ox}
\icmlauthor{Judith Rousseau}{ox}
\end{icmlauthorlist}

\icmlaffiliation{ox}{Department of Statistics, University of Oxford, Oxford, United Kingdom}

\icmlcorrespondingauthor{Soufiane Hayou}{soufiane.hayou@stats.ox.ac.uk}
%\icmlcorrespondingauthor{Eee Pppp}{ep@eden.co.uk}

% You may provide any keywords that you
% find helpful for describing your paper; these are used to populate
% the "keywords" metadata in the PDF but will not be shown in the document
\icmlkeywords{Machine Learning, ICML}

\vskip 0.3in
]

% this must go after the closing bracket ] following \twocolumn[ ...

% This command actually creates the footnote in the first column
% listing the affiliations and the copyright notice.
% The command takes one argument, which is text to display at the start of the footnote.
% The \icmlEqualContribution command is standard text for equal contribution.
% Remove it (just {}) if you do not need this facility.

\printAffiliationsAndNotice{}  % leave blank if no need to mention equal contribution
%\printAffiliationsAndNotice{\icmlEqualContribution} % otherwise use the standard text.
\begin{abstract}
The weight initialization and the activation function of deep neural networks have a crucial impact on the performance of the training procedure. An inappropriate selection can lead to the loss of information of the input during forward propagation and the exponential vanishing/exploding of gradients during back-propagation. Understanding the theoretical properties of untrained random networks is key to identifying which deep networks may be trained successfully as recently demonstrated by \cite{samuel} who showed that for deep feedforward neural networks only a specific choice of hyperparameters known as the `Edge of Chaos' can lead to good performance. While the work by \cite{samuel} discuss trainability issues, we focus here on training acceleration and overall performance. We give a comprehensive theoretical analysis of the Edge of Chaos and show that we can indeed tune the initialization parameters and the activation function in order to accelerate the training and improve performance.
\end{abstract}

\section{Introduction}
\label{section:intro}
Deep neural networks have become extremely popular as they achieve state-of-the-art performance on a variety of important applications including language processing and computer vision; see, e.g., \cite{Goodfellow-et-al-2016}. The success of these models has motivated the use of increasingly deep networks and stimulated a large body of work to understand their theoretical properties. It is impossible to provide here a comprehensive summary of the large number of contributions within this field. To cite a few results relevant to our contributions, \cite{montufar} have shown that neural networks have exponential expressive power with respect to the depth while \cite{poole} obtained similar results using a topological measure of expressiveness.

Since the training of deep neural networks is a non-convex optimization problem, the weight initialization and the activation function will essentially determine the functional subspace that the optimization algorithm will explore. We follow here the approach of \cite{poole} and \cite{samuel} by investigating the behaviour of random networks in the infinite-width and finite-variance i.i.d. weights context where they can be approximated by a Gaussian process as established by \cite{neal}, \cite{matthews} and \cite{lee}.

In this paper, our contribution is three-fold. Firstly, we provide a comprehensive analysis of the so-called Edge of Chaos (EOC) curve and show that initializing a network on this curve leads to a deeper propagation of the information through the network and accelerates the training. In particular, we show that a feedforward ReLU network initialized on the EOC acts as a simple residual ReLU network in terms of information propagation. Secondly, we introduce a class of smooth activation functions which allow for deeper signal propagation (Proposition \ref{prop:rate_smooth_functions}) than ReLU. In particular, this analysis sheds light on why smooth versions of ReLU (such as SiLU or ELU) perform better experimentally for deep neural networks; see, e.g., \cite{clevert}, \cite{pedamonti},  \cite{ramachandran} and \cite{milletari}.
%We focus in this paper on the 'information flow' part as the vanishing gradient problem has been %already studied by Sussillo et al. \cite{sussillo} where the authors give an initialization scheme %that could mitigate this problem. \textcolor{red}{Arno: unclear as it is whether Sussillo does %random initialization on the edge of chaos, we need to clarify this bit}
% These conditions are satisfied by the Swish activation $\phi_{swish}(x) = x \cdot \text{sigmoid}(x)$ used in \cite{hendrycks}, \cite{elfwing} and \cite{ramachandran}. In recent work, \cite{ramachandran} used automated search techniques to identify new activation functions %(algorithmic searching for the best activation function with a given predefined structure)
% and found experimentally that functions of the form $\phi(x) = x \cdot \text{sigmoid}(\beta x)$ appear to perform indeed better than many alternative functions, including ReLU.
%The activation  $\phi_{new}$ has also been used recently by Elfwing %et al. \cite{elfwing} in a reinforcement learning context who %suggest it outperforms ReLu.
Lastly, we show the existence of optimal points on the EOC curve and we provide guidelines for the choice of such point and we demonstrate numerically the consistence of this approach. We also complement previous empirical results by illustrating the benefits of an initialization on the EOC in this context. All proofs are given in the Supplementary Material.

\section{On Gaussian process approximations of neural networks and their stability} \label{sec:Gaussian}
\subsection{ Setup and notations}

We use similar notations to those of \cite{poole} and \cite{lee}. Consider a fully connected feedforward random neural network of depth $L$, widths $(N_l)_{1\leq l \leq L}$, weights $W^l_{ij} \stackrel{iid}\sim \mathcal{N}(0, \frac{\sigma_w^2}{N_{l-1}})$ and bias $B^l_i \stackrel{iid}\sim \mathcal{N}(0,\sigma^2_b)$, where $\mathcal{N}(\mu, \sigma^{2})$ denotes the normal distribution of mean $\mu$ and variance $\sigma^{2}$. For some input $a \in \mathbb{R}^{d}$, the propagation of this input through the network is given for an activation function  $\phi:\mathbb{R} \rightarrow \mathbb{R}$ by
\begin{align}
y^1_i(a) &= \sum_{j=1}^{d} W^1_{ij} a_j + B^1_i, \\
y^l_i(a) &= \sum_{j=1}^{N_{l-1}} W^l_{ij} \phi(y^{l-1}_j(a)) + B^l_i, \quad \mbox{for } l\geq 2.
\end{align}
Throughout this paper we assume that for all $l $ the processes $y_i^{l} (.) $ are independent (across $i$) centred Gaussian processes with covariance kernels $\kappa^l$ and write accordingly $y_i^{l} \stackrel{ind}{\sim} \mathcal{GP}(0, \kappa^{l})$. This is an idealized version of the true processes corresponding to choosing  $N_{l-1}= +\infty$ (which implies, using Central Limit Theorem, that $y_i^{l} (a)$ is a Gaussian variable for any input $a$). The approximation of  $y_i^l(.)$ by a Gaussian process was first proposed by \cite{neal} in the single layer case and has been recently extended to the multiple layer case by \cite{lee} and \cite{matthews}. We recall here the expressions of the limiting Gaussian process kernels.
For any input $a \in \mathbb R^d$, $\mathbb E[y^l_i(a)] = 0$ so that for any inputs $a,b\in \mathbb R^d$
\begin{align*}
\kappa^l(a,b) &= \mathbb{E}[y^l_i(a)y^l_i(b)]\\
&= \sigma^2_b + \sigma^2_w \mathbb{E}[\phi(y^{l-1}_i(a))\phi(y^{l-1}_i(b))]\\
&= \sigma^2_b + \sigma^2_w F_{\phi} (\kappa^{l-1}(a,a), \kappa^{l-1}(a,b), \kappa^{l-1}(b,b))
\end{align*}
where $F_{\phi}$ is a function that only depends on $\phi$. This gives a recursion to calculate the kernel $\kappa^l$; see, e.g., \cite{lee} for more details. We can also express the kernel $\kappa^l(a,b)$ (which we denote hereafter by $q^l_{ab}$) in terms of the correlation $c^l_{ab}$ in the $l^{th}$ layer
\begin{align*}
q^l_{ab} = \sigma^2_b + \sigma^2_w \mathbb{E}[\phi(\sqrt{q^{l-1}_{a}} Z_1)\phi(\sqrt{q^{l-1}_{b}} U_2(c^{l-1}_{ab}))]
\end{align*}
where $q^{l-1}_{a}:=q^{l-1}_{aa}$, resp. $c^{l-1}_{ab}:=q^{l-1}_{ab}/{\sqrt{q^{l-1}_{a} q^{l-1}_{b}}}$, is the variance, resp. correlation, in the $(l-1)^{th}$ layer and $U_2(x) = x Z_1 +
\sqrt{1 - x^2} Z_2$ where $Z_1$, $Z_2$ are independent standard Gaussian random variables. %For the $l^{th}$ layer, the correlation is thus given by $c^l_{ab} = \frac{q^l_{ab}}{\sqrt{q^{l}_{a} q^{l}_{b}}}$.
%Since $q^1_{a} = \sigma_b^2 + \frac{\sigma_w^2}{d} ||a||^2$, the variance $q^l_{a}$ can be interpreted as the square of the
 %length of the input vector.
When it propagates through the network. $q_a^l$ is updated through the layers by the recursive formula $q^l_{a} = F(q^{l-1}_a)$, where $F$ is the `variance function' given by
\begin{equation}\label{function:var}
F(x) = \sigma^2_b + \sigma^2_w \mathbb{E}[\phi(\sqrt{x} Z)^2], \quad  \quad Z\sim \mathcal{N}(0,1)
\end{equation}
Throughout this paper, $Z, Z_1, Z_2$ will always denote independent standard Gaussian variables, and $a, b$ two inputs for the network.\\
Before starting our analysis, we define the transform $V$ for a function $\phi$ defined on $\mathbb{R}$ by $V[\phi](x) = \sigma^2_w \mathbb{E}[\phi(\sqrt{x}Z)^2]$ for $x\geq0$. We have $F = \sigma_b^2 + V[\phi]$.\\

Let $E$ and $G$ be two subsets of $\mathbb{R}$. We define the following sets of functions for $k\in\mathbb{N}$ by
\begin{align*}
\mathcal{D}^k(E,G) &= \{ f:E \rightarrow G \text{ such that $f^{(k)}$ exists}\}\\
 \mathcal{C}^k(E,G) &= \{ f \in \mathcal{D}^k(E,G) \text{ such that $f^{(k)}$ is continuous}\} \\
\mathcal{D}_g^k(E,G) &= \{f \in \mathcal{D}^k(E,G) \hspace{0.05cm}:\forall j\leq k, \mathbb{E}[f^{(j)}(Z)^2] < \infty\}\\
\mathcal{C}_{g}^k(E,G) &= \{f \in \mathcal{C}^k(E,G) \hspace{0.05cm}:\forall j\leq k, \mathbb{E}[f^{(j)}(Z)^2] < \infty\}\\
\end{align*}
where $f^{(k)}$ is the $k^{\text{th}}$ derivative of $f$. When $E$ and $G$ are not explicitly mentioned, we assume $E=G=\mathbb{R}$.

\subsection{Limiting behaviour of the variance and covariance operators}\label{sec:limit}

We analyze here the limiting behaviour of $q_a^l$ and $c_{a,b}^l$ as $l$ goes to infinity. From now onwards, we will also assume without loss of generality that $c^1_{ab} \geq 0$ (similar results can be obtained straightforwardly when $c^1_{ab} \leq 0$). We first need to define the \emph{Domains of Convergence} associated with an activation function $\phi$.

\begin{definition}\label{def:domain}
Let $\phi \in \mathcal{D}^0_g$, $(\sigma_b, \sigma_w) \in( \mathbb{R}^+)^2$.\\
    (i) Domain of convergence for the variance $D_{\phi, var}$ : $(\sigma_b, \sigma_w) \in D_{\phi, var}$ if there exists $K>0$, $q\geq 0$ such that for any input $a$ with $q^1_a \leq K$, $\lim_{l \rightarrow \infty} q^l_a = q$. We denote by $K_{\phi, var}(\sigma_b, \sigma_w)$ the maximal $K$ satisfying this condition.\\
    (ii) Domain of convergence for the correlation $D_{\phi, corr}$: $(\sigma_b, \sigma_w) \in D_{\phi, corr}$ if there exists $K>0$ such that for any two inputs $a,b$ with $q^1_a,q^1_b \leq K$, $\lim_{l \rightarrow \infty} c^l_{ab} = 1$. We denote by $K_{\phi, corr}(\sigma_b, \sigma_w)$ the maximal $K$ satisfying this condition.
\end{definition}

{\it Remark}: Typically, $q$ in Definition \ref{def:domain} is a fixed point of the variance function defined in \eqref{function:var}. Therefore, it is easy to see that for any $(\sigma_b, \sigma_w)$ such that $F$ is non-decreasing and admits at least one fixed point, we have $K_{\phi, var}(\sigma_b, \sigma_w)\geq q$ where $q$ is the minimal fixed point; i.e. $q:= \min \{ x: F(x)=x \}$. Thus, if we re-scale the input data to have $q^1_a \leq q$, the variance $q^l_a$ converges to $q$. We can also re-scale the variance $\sigma_w$ of the first layer (only) to assume that $q^1_a \leq q$ for all inputs $a$.

\begin{figure*}
    \centering
    \subfigure[ReLU with $(\sigma_b, \sigma_w)=(1, 1)$]{%
    \label{fig:constant_relu}%
    \includegraphics[width=1.7in]{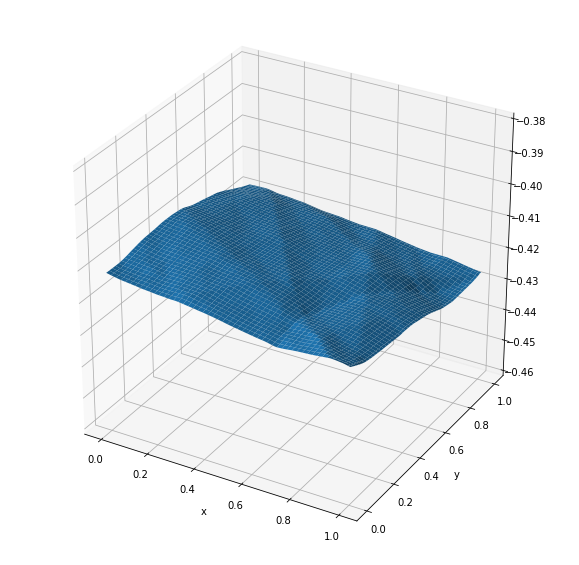}%
    }%
    \subfigure[Tanh with $(\sigma_b, \sigma_w)=(1, 1)$]{%
    \label{fig:constant_tanh}%
    \includegraphics[width=1.7in]{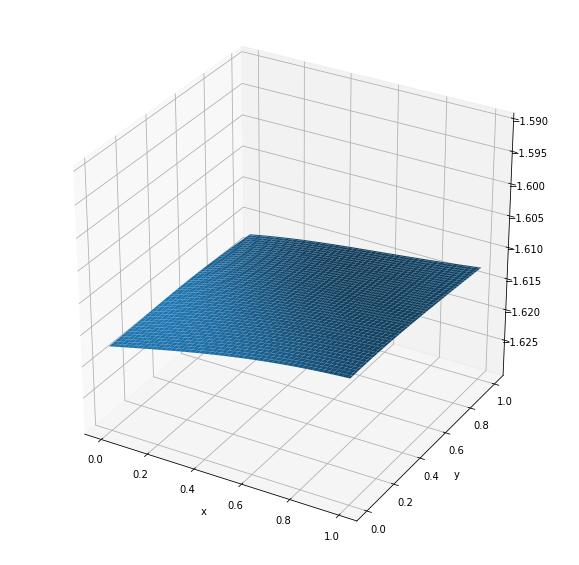}%
    }%
    \subfigure[Tanh with $(\sigma_b, \sigma_w)=(0.3, 2)$ ]{%
    \label{fig:chaotic_tanh}%
    \includegraphics[width=1.7in]{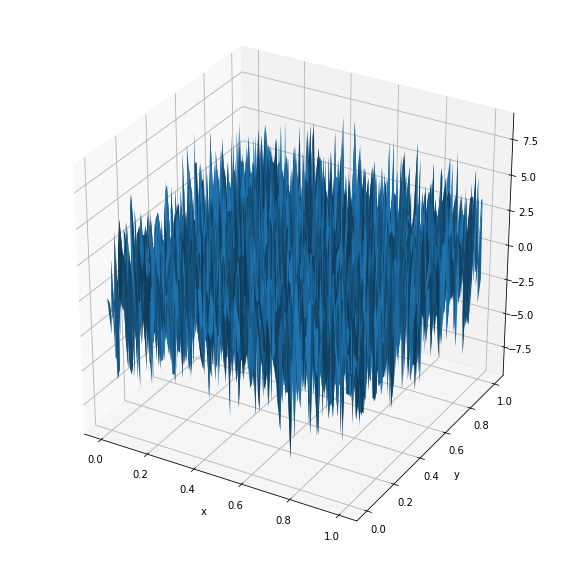}%
    }
    \caption{Draws of outputs for ReLU and Tanh networks for different parameters $(\sigma_b, \sigma_w)$. Figures (a) and (b) show the effect of an initialization in the ordered phase, the outputs are nearly constant. Figure (c) shows the effect of an initialization in the chaotic phase.}
    \label{fig:different_outputs}
\end{figure*}

The next Lemma gives sufficient conditions under which $K_{\phi, var}$ and $K_{\phi, corr}$ are infinite.

\begin{lemma}\label{lemma:infinite_K}
Assume $\phi''$ exists at least in the distribution sense.\footnote{ReLU admits a Dirac mass in 0 as second derivative and so is covered by our developments.} \\
Let $M_{\phi}:=\mathrm{sup}_{x\geq 0} \mathbb{E}[|\phi'^2(x Z) + \phi''(x Z) \phi(x Z)|] $. Assume $M_{\phi} < \infty$, then for $\sigma_w^2 < \frac{1}{M_{\phi}}$ and $\sigma_b\geq0$, we have $(\sigma_b, \sigma_w) \in D_{\phi, var}$ and $K_{\phi, var}(\sigma_b, \sigma_w) = \infty$.\\
Let  $C_{\phi, \delta} := \mathrm{sup}_{x,y \geq0, |x-y|\leq \delta, c \in [0,1]} \mathbb{E}[|\phi'(x Z_1)\phi'(y (cZ_1 + \sqrt{1-c^2}Z_2)|]$. Assume $C_{\phi, \delta}<\infty$ for some $\delta>0$, then for $\sigma^2_w < \min(\frac{1}{M_{\phi}}, \frac{1}{C_{\phi}})$ and $\sigma_b\geq0$, we have $(\sigma_b, \sigma_w) \in D_{\phi, var} \cap D_{\phi, corr}$ and $K_{\phi, var}(\sigma_b, \sigma_w)=K_{\phi, corr}(\sigma_b, \sigma_w)=\infty$.
\end{lemma}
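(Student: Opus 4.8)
The plan is to reduce both claims to showing that the relevant one-dimensional update map is a \emph{contraction}, after which global convergence of the fixed-point iteration follows for free. For the variance statement I would first differentiate the variance function $F(x) = \sigma_b^2 + \sigma_w^2\mathbb{E}[\phi(\sqrt{x}Z)^2]$. Differentiating under the expectation and applying Gaussian integration by parts (Stein's identity $\mathbb{E}[Zh(Z)] = \mathbb{E}[h'(Z)]$ with $h(z) = \phi(\sqrt{x}z)\phi'(\sqrt{x}z)$) yields the clean expression
\begin{equation*}
F'(x) = \sigma_w^2\,\mathbb{E}\big[\phi'(\sqrt{x}Z)^2 + \phi(\sqrt{x}Z)\phi''(\sqrt{x}Z)\big].
\end{equation*}
Taking absolute values inside the expectation and the supremum over $x\ge 0$ gives $|F'(x)| \le \sigma_w^2 M_\phi$, which is $<1$ precisely under $\sigma_w^2 < 1/M_\phi$. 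Since $F$ maps $[0,\infty)$ into $[\sigma_b^2,\infty)\subseteq[0,\infty)$ and has Lipschitz constant $\sigma_w^2 M_\phi<1$ by the mean value theorem, it is a contraction of the complete space $[0,\infty)$. Hence $F$ has a unique fixed point $q$ and the iteration $q^l_a = F(q^{l-1}_a)$ converges to $q$ from \emph{every} starting value $q^1_a\ge 0$. As no restriction on $q^1_a$ is needed, $(\sigma_b,\sigma_w)\in D_{\phi,var}$ with $K_{\phi,var}=\infty$.

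For the correlation statement I would work, at the variance fixed point, with the correlation map $c\mapsto f(c) := q^{-1}\big(\sigma_b^2 + \sigma_w^2\mathbb{E}[\phi(\sqrt{q}Z_1)\phi(\sqrt{q}U_2(c))]\big)$, noting that $U_2(1)=Z_1$ gives $f(1) = F(q)/q = 1$, so $c=1$ is a fixed point. The key computation is $f'(c)$: viewing the expectation as a bivariate Gaussian expectation with correlation $c$ and applying the two-dimensional Gaussian integration by parts (Price's theorem), the terms involving $Z_2$ cancel and one obtains
\begin{equation*}
f'(c) = \sigma_w^2\,\mathbb{E}\big[\phi'(\sqrt{q}Z_1)\,\phi'(\sqrt{q}U_2(c))\big].
\end{equation*}
Bounding inside the expectation and using the definition of $C_{\phi,\delta}$ at $x=y=\sqrt{q}$ (so $|x-y|=0\le\delta$) gives $|f'(c)|\le \sigma_w^2 C_{\phi,\delta}<1$ under $\sigma_w^2<1/C_\phi$. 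Thus $f$ is a contraction of $[0,1]$ with fixed point $1$, and its iterates converge to $1$.

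The main obstacle is that the genuine correlation recursion is \emph{not} autonomous: $c^l_{ab}$ is driven by the coupled variance sequences $q^l_a,q^l_b$, which equal $q$ only in the limit, so $c=1$ is not an exact fixed point of the true finite-$l$ update and the clean map $f$ above is only its limit. I would handle this by first invoking the variance result to ensure $q^l_a,q^l_b\to q$, hence $|\sqrt{q^l_a}-\sqrt{q^l_b}|\le\delta$ for all $l$ beyond some $L_0$. On this range the derivative of the true variance-dependent correlation map in $c$ is still governed by $\sigma_w^2\mathbb{E}[\phi'(\sqrt{q^l_a}Z_1)\phi'(\sqrt{q^l_b}U_2(c))]$, which the definition of $C_{\phi,\delta}$ bounds \emph{uniformly} by $\sigma_w^2 C_{\phi,\delta}<1$ exactly because it takes the supremum over all $x,y$ with $|x-y|\le\delta$. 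A uniform contraction estimate, combined with the vanishing of the variance-induced shift of the fixed point, then yields $c^l_{ab}\to 1$, so $(\sigma_b,\sigma_w)\in D_{\phi,var}\cap D_{\phi,corr}$ with $K_{\phi,var}=K_{\phi,corr}=\infty$. The routine points I would still verify are the interchange of differentiation and expectation and the integration-by-parts steps (covered for ReLU by the distributional second derivative, per the footnote), together with a careful accounting of the normalizing prefactor $\sqrt{q^l_a q^l_b}/\sqrt{F(q^l_a)F(q^l_b)}$, which tends to $1$ as the variances converge.
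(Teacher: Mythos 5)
Your proposal is correct and follows essentially the same route as the paper: Gaussian integration by parts to obtain $F'(x)=\sigma_w^2\mathbb{E}[\phi'(\sqrt{x}Z)^2+\phi(\sqrt{x}Z)\phi''(\sqrt{x}Z)]$ and $f'(c)=\sigma_w^2\mathbb{E}[\phi'(\sqrt{q}Z_1)\phi'(\sqrt{q}U_2(c))]$, the contraction constants $\sigma_w^2 M_\phi<1$ and $\sigma_w^2 C_{\phi,\delta}<1$, Banach's fixed-point theorem for the variance, and the reduction to the limiting correlation map once the variances are within $\delta$ of each other. The only (immaterial) difference is the final step for the correlations --- the paper shows the successive differences $|c^{l+1}-c^l|$ decay geometrically, so $c^l$ is Cauchy, and then identifies the limit as the unique fixed point $1$ of the limiting contraction $f$, whereas you run a perturbed-contraction estimate directly against the fixed point $1$; both work, and your explicit attention to the normalizing prefactor tending to $1$ is, if anything, slightly more careful than the paper's own writeup.
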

The proof of Lemma \ref{lemma:infinite_K} is straightforward. We prove that $\sup F'(x) = \sigma_w^2 M_{\phi}$ and then apply the Banach fixed point theorem. Similar ideas are used for $C_{\phi, \delta}$.

{\it Example}: For ReLU activation function, we have $M_{ReLU} = 1/2$ and $C_{ReLU, \delta} \leq 1$ for any $\delta>0$.

In the domain of convergence $D_{\phi, var} \cap D_{\phi, corr}$,  for all $a,b\in \mathbb R^d$, we have $y_i^\infty(a ) = y_i^\infty(b)$ almost surely and the outputs of the network are constant functions. Figures \ref{fig:constant_relu} and \ref{fig:constant_tanh} illustrate this behaviour for ReLU and Tanh with inputs in $[0,1]^2$ using a network of depth $L=20$ with $N_l=300$ neurons per layer. The draws of outputs of these networks are indeed almost constant.

Under the conditions of Lemma \ref{lemma:infinite_K}, both the variance and the correlations converge exponentially fast (contraction mapping). To refine this convergence analysis, \cite{samuel} established the existence of $\epsilon_q$ and $\epsilon_c$ such that $ |q_a^l - q|\sim e^{-l/\epsilon_q}$ and $|c^l_{ab} - 1| \sim e^{-l/\epsilon_c}$ when fixed points exist. The quantities $\epsilon_q$ and $\epsilon_c$ are called `depth scales' since they represent the range of depth to which the variance and correlation can propagate without being exponentially close to their limits. More precisely, if we write $\chi_1 = \sigma^2_w \mathbb{E}[\phi'(\sqrt{q}Z)^2]$ and $\alpha  = \chi_1 + \sigma^2_w \mathbb{E}[\phi''(\sqrt{q}Z)\phi(\sqrt{q}Z)]$ then the depth scales are given by
$\epsilon_{q} = -\log(\alpha)^{-1}$ and $\epsilon_{c} = -\log(\chi_1)^{-1}$.
The equation $\chi_1=1$ corresponds to an infinite depth scale of the correlation. It is called the EOC as it separates two phases: an ordered phase where the correlation converges to 1 if $\chi_1<1$ and a chaotic phase where $\chi_1>1$  and the correlations do not converge to 1. In this chaotic regime, it has been observed in \cite{samuel} that the correlations converge to some value $c<1$ when $\phi(x)=\text{Tanh}(x)$ and that $c$ is independent of the correlation between the inputs. This means that very close inputs (in terms of correlation) lead to very different outputs. Therefore, in the chaotic phase, at the limit of infinite width and depth, the output function of the neural network is non-continuous everywhere. Figure \ref{fig:chaotic_tanh} shows an example of such behaviour for Tanh.

\begin{definition}[Edge of Chaos]\label{def:eoc}
For $(\sigma_b,\sigma_w) \in D_{\phi, var}$, let $q$ be the limiting variance\footnote{The limiting variance is a function of $(\sigma_b,\sigma_w)$ but we do not emphasize it notationally.}. The Edge of Chaos (EOC) is the set of values of $(\sigma_b,\sigma_w)$ satisfying $\chi_1=\sigma_w^2 \mathbb{E}[\phi'(\sqrt{q}Z)^2] = 1$.
\end{definition}
To further study the EOC regime, the next lemma introduces a function $f$ called the `correlation function' showing that that the correlations have the same asymptotic behaviour as the  time-homogeneous dynamical system $c^{l+1}_{ab}=f(c^l_{ab})$.
\begin{lemma}\label{correlationfunction}
Let $(\sigma_b, \sigma_w) \in D_{\phi, var} \cap D_{\phi, corr}$ such that $q>0$, $a,b \in \mathbb R^d$ and $\phi$ a measurable function such that $\sup_{x \in S} \mathbb{E}[\phi(x Z)^2]<\infty$ for all compact sets $S$. Define $f_l$ by $c_{ab}^{l+1} = f_l(c_{ab}^l)$ and $f$ by $f(x) = \frac{\sigma^2_b + \sigma^2_w \mathbb{E}[\phi(\sqrt{q}Z_1)\phi(\sqrt{q}(x Z_1 + \sqrt{1-x^2}Z_2))}{q}$. Then $\lim_{l \rightarrow \infty} \sup_{x \in [0, 1]} |f_l(x) - f(x)| = 0$.
\end{lemma}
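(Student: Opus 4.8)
The plan is to make the $l$-dependence of $f_l$ explicit and then reduce the uniform convergence to two scalar (i.e.\ $x$-independent) limits. From the covariance recursion together with $c_{ab}^{l+1} = q_{ab}^{l+1}/\sqrt{q_a^{l+1}q_b^{l+1}}$ and $q_a^{l+1} = F(q_a^l)$, the map sending $c_{ab}^l$ to $c_{ab}^{l+1}$ is
$$f_l(x) = \frac{\sigma_b^2 + \sigma_w^2\,\mathbb{E}[\phi(\sqrt{q_a^l}Z_1)\phi(\sqrt{q_b^l}(xZ_1 + \sqrt{1-x^2}Z_2))]}{\sqrt{F(q_a^l)F(q_b^l)}},$$
and $f$ is the same expression with $q_a^l,q_b^l$ both replaced by $q$ and $F(q)=q$ in the denominator. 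Since $(\sigma_b,\sigma_w)\in D_{\phi,var}$, I have $q_a^l\to q$ and $q_b^l\to q$, and because $F(q_a^l)=q_a^{l+1}$ is merely the next term of a convergent sequence, the denominator $\sqrt{F(q_a^l)F(q_b^l)}\to q>0$; crucially this denominator does not depend on $x$. So it suffices to prove that the numerators $N_l(x):=\mathbb{E}[\phi(\sqrt{q_a^l}Z_1)\phi(\sqrt{q_b^l}U_2(x))]$ converge to $N(x):=\mathbb{E}[\phi(\sqrt{q}Z_1)\phi(\sqrt{q}U_2(x))]$ uniformly in $x\in[0,1]$, where $U_2(x)=xZ_1+\sqrt{1-x^2}Z_2$, and then recombine with the uniform (constant in $x$) limit of the denominator.

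The observation that makes the problem tractable is that $U_2(x)$ is a standard Gaussian for \emph{every} $x\in[0,1]$, since $x^2+(1-x^2)=1$. Consequently every factor in which $U_2(x)$ appears alone has an $x$-independent law: in particular $\mathbb{E}[\phi(\sqrt{q_b^l}U_2(x))^2]=\mathbb{E}[\phi(\sqrt{q_b^l}Z)^2]$, which is finite and bounded uniformly in large $l$ because $\sqrt{q_b^l}$ eventually lies in a compact neighbourhood of $\sqrt{q}$ and $\phi$ satisfies $\sup_{s\in S}\mathbb{E}[\phi(sZ)^2]<\infty$ on compact $S$.

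Next I would telescope the product and apply Cauchy--Schwarz. Writing the integrand difference as $(\phi(\sqrt{q_a^l}Z_1)-\phi(\sqrt{q}Z_1))\phi(\sqrt{q_b^l}U_2)+\phi(\sqrt{q}Z_1)(\phi(\sqrt{q_b^l}U_2)-\phi(\sqrt{q}U_2))$ and bounding each term yields
$$|N_l(x)-N(x)|\le \sqrt{\mathbb{E}[(\phi(\sqrt{q_a^l}Z)-\phi(\sqrt{q}Z))^2]}\,\sqrt{\mathbb{E}[\phi(\sqrt{q_b^l}Z)^2]} + \sqrt{\mathbb{E}[\phi(\sqrt{q}Z)^2]}\,\sqrt{\mathbb{E}[(\phi(\sqrt{q_b^l}Z)-\phi(\sqrt{q}Z))^2]},$$
where I used the invariance above to replace $U_2(x)$ by a standard Gaussian inside each $x$-dependent expectation; the right-hand side no longer depends on $x$, so this is a uniform bound on $[0,1]$.

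The hard part will be the two remaining $x$-independent limits, namely $\mathbb{E}[(\phi(\sqrt{q_c^l}Z)-\phi(\sqrt{q}Z))^2]\to0$ for $c\in\{a,b\}$: this is exactly $L^2(\gamma)$-continuity of the dilation $s\mapsto\phi(s\,\cdot\,)$ at $s=\sqrt{q}$, where $\gamma$ is the standard Gaussian law, and it is delicate precisely because $\phi$ is assumed only measurable. I would establish it by expressing the expectation through the $\mathcal{N}(0,s_l^2)$ and $\mathcal{N}(0,q)$ densities after a change of variables and approximating $\phi$ in the relevant weighted $L^2$ by continuous compactly supported functions: for such approximants the convergence is immediate by dominated convergence, while the approximation error is controlled uniformly for $\sqrt{q_c^l}$ in a compact neighbourhood of $\sqrt{q}$ by the hypothesis $\sup_{s\in S}\mathbb{E}[\phi(sZ)^2]<\infty$, which supplies the uniform integrability needed to pass to the limit. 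Feeding these two limits back into the Cauchy--Schwarz bound gives $\sup_{x\in[0,1]}|N_l(x)-N(x)|\to0$, and dividing by the $x$-independent denominators yields $\sup_{x\in[0,1]}|f_l(x)-f(x)|\to0$, as claimed.
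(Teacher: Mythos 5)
Your proof follows essentially the same route as the paper's: the same splitting into an $x$-independent denominator contribution plus a numerator contribution, the same telescoping of the product combined with Cauchy--Schwarz, and the same exploitation of the fact that $U_2(x)$ is standard Gaussian for every $x\in[0,1]$ to get bounds that do not depend on $x$. The only difference is at the very end: where the paper concludes with a one-line appeal to dominated convergence, you correctly observe that $\mathbb{E}[(\phi(\sqrt{q_c^l}Z)-\phi(\sqrt{q}Z))^2]\to 0$ is not immediate for merely measurable $\phi$ and supply the approximation-by-continuous-compactly-supported-functions argument (with uniform integrability from $\sup_{s\in S}\mathbb{E}[\phi(sZ)^2]<\infty$) that makes this step rigorous.
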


\begin{figure*}
    \centering
    \subfigure[Convergence of the correlation to 1 with $c^0 = 0.1$]{%
    \label{fig:convergence_corre_relu}%
    \includegraphics[width=1.8in]{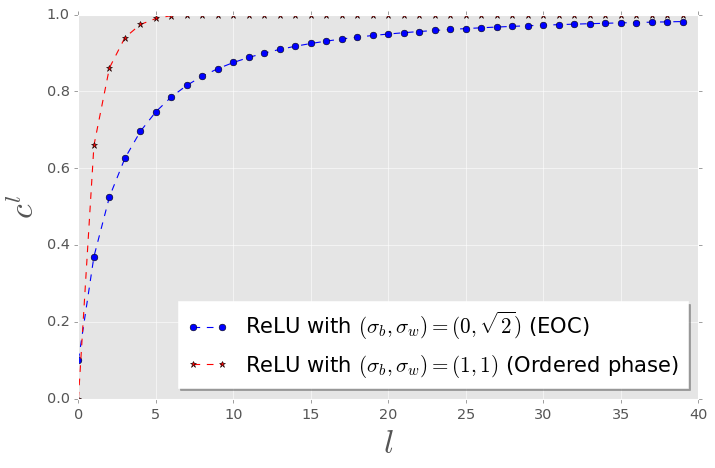}%
    }
    \subfigure[]%
    [Correlation function $f$]{%
    \label{fig:correlation_function_ReLU}%
    \includegraphics[width=1.8in]{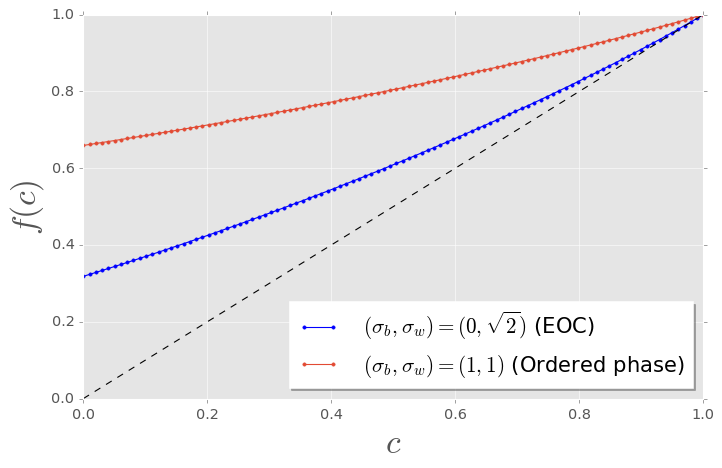}
    }%
    \subfigure[]%
    [Output of 100x20 ReLU network on the EOC]{%
    \label{fig:output_relu_100x20_edge}%
    \includegraphics[width=1.6in]{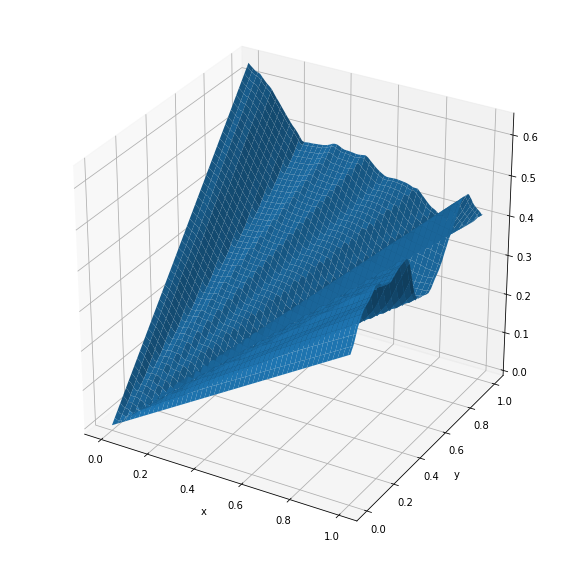}
    }
    \caption{Impact of the EOC initialization on the correlation and the correlation function. In (a), the correlation converges to 1 at a sub-exponential rate when the network is initialized on the EOC. In (b), the correlation function $f$ satisfies $f'(1)=1$ on the EOC.}
    \label{fig:constant}
\end{figure*}
The condition on $\phi$ in Lemma \ref{correlationfunction} is violated only by activation functions with square exponential growth (which are not used in practice), so from now onwards, we use this approximation in our analysis. Note that being on the EOC is equivalent to $(\sigma_b,\sigma_w)$ satisfying $f'(1)=1$. In the next section, we analyze this phase transition carefully for a large class of activation functions.

\section{Edge of Chaos}
To illustrate the effect of the initialization on the EOC, we plot in Figure \ref{fig:output_relu_100x20_edge} the output of a ReLU neural network with 20 layers and 100 neurons per layer with parameters $(\sigma_b^2, \sigma_w^2) = (0, 2)$ (as we will see later $\EOC = \{(0,\sqrt{2})\}$ for ReLU). Unlike the output in Figure \ref{fig:constant_relu}, this output displays much more variability. However, we prove below that the correlations still converge to 1 even in the EOC regime, albeit at a slower rate.

\subsection{ReLU-like activation functions} \label{sec:homoge}
ReLU has replaced classical activations (sigmoid, Tanh,...) which suffer from gradient vanishing (see e.g. \cite{glorot} and \cite{nair}). Many variants such as Leaky-ReLU were also shown to enjoy better performance in test accuracy \cite{xu}. This motivates the analysis of such functions from an initialization point of view. Let us first define this class.
\begin{definition}[ReLU-like functions]
A function $\phi$ is ReLU-like if it is of the form
\begin{equation*}
    \phi(x) =
    \begin{cases*}
      \lambda x & if $x>0$ \\
      \beta x   & if $x \leq 0$
    \end{cases*}
  \end{equation*}
where $\lambda, \beta \in \mathbb{R}$.
\end{definition}

ReLU corresponds to $\lambda = 1$ and $\beta =0$. For this class of activation functions, the EOC in terms of definition \ref{def:eoc} is reduced to the empty set.
However, we can define a weak version of the EOC for this class. From Lemma \ref{lemma:infinite_K}, when $\sigma_w< \sqrt{\frac{2}{\lambda^2 + \beta^2}}$, the variances converge to $q=\frac{\sigma_b^2}{1 - \sigma_w^2/2}$ and the correlations converge to 1 exponentially fast. If $\sigma_w>\sqrt{\frac{2}{\lambda^2 + \beta^2}}$ the variances converge to infinity. We then have the following result.
\begin{lemma}[Weak EOC]
Let $\phi$ be a ReLU-like function with $\lambda, \beta$ defined as above. Then $f'_l$ does not depend on $l$, and $f_l'(1) = 1$ and $q^l$ bounded holds if and only if $(\sigma_b, \sigma_w) = (0, \sqrt{\frac{2}{\lambda^2 + \beta^2}})$.\\
We call the singleton $\{ (0, \sqrt{\frac{2}{\lambda^2 + \beta^2}})\}$ the weak EOC.
\end{lemma}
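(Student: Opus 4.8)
The key property to exploit is the positive $1$-homogeneity of a ReLU-like $\phi$: since $\phi(tx)=t\phi(x)$ for all $t>0$, we have $\phi(\sqrt{q}\,Z)=\sqrt{q}\,\phi(Z)$ for every $q\ge 0$, and $\phi'$ is $0$-homogeneous so $\phi'(\sqrt{q}\,Z)=\phi'(Z)$ almost surely. I would first record the two elementary moments, both obtained by splitting over $\{Z>0\}$ and $\{Z\le 0\}$ and using $\mathbb{E}[Z^2\mathbf{1}_{Z>0}]=\mathbb{P}(Z>0)=\tfrac12$:
\[
\mathbb{E}[\phi(Z)^2]=\mathbb{E}[\phi'(Z)^2]=\tfrac{\lambda^2+\beta^2}{2}.
\]
Abbreviating $g(x):=\mathbb{E}[\phi(Z_1)\phi(xZ_1+\sqrt{1-x^2}Z_2)]$, this reads $g(1)=\tfrac{\lambda^2+\beta^2}{2}$.

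Next I would turn homogeneity into explicit recursions. Substituting $\phi(\sqrt{q^l_a}\,Z_1)=\sqrt{q^l_a}\,\phi(Z_1)$ into the kernel formula collapses the variance map to the affine recursion $q^{l+1}_a=\sigma_b^2+\sigma_w^2 g(1)\,q^l_a$ and yields
\[
f_l(x)=\frac{\sigma_b^2+\sigma_w^2\sqrt{q^l_a q^l_b}\,g(x)}{\sqrt{q^{l+1}_a q^{l+1}_b}},\qquad f_l'(x)=\frac{\sigma_w^2\sqrt{q^l_a q^l_b}}{\sqrt{q^{l+1}_a q^{l+1}_b}}\,g'(x),
\]
the denominator being $x$-independent since the variance recursion is decoupled from the correlation. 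All $l$-dependence sits in the prefactor $\sqrt{q^l_a q^l_b}/\sqrt{q^{l+1}_a q^{l+1}_b}$; because $q^{l+1}=\sigma_b^2+\sigma_w^2 g(1)q^l$, this prefactor equals the constant $1/g(1)$ for every $l$ when $\sigma_b=0$, and genuinely varies with $l$ otherwise (the variance not having reached its fixed point for generic inputs). This establishes the ``$f_l'$ independent of $l$'' clause and pins $\sigma_b=0$.

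To handle $f_l'(1)$ I would compute $g'(1)$. Writing $(U,V)$ for a standard bivariate normal pair with correlation $x$, Gaussian integration by parts (Price's identity) gives $g'(x)=\mathbb{E}[\phi'(U)\phi'(V)]$, hence $g'(1)=\mathbb{E}[\phi'(Z)^2]=g(1)$; note this is exactly the quantity $\chi_1=\sigma_w^2\mathbb{E}[\phi'(\sqrt{q} Z)^2]=\sigma_w^2 g(1)$ appearing in the EOC. Once $\sigma_b=0$ the variance cancels, $f_l(x)=g(x)/g(1)$ for all $l$, and $f_l'(1)=g'(1)/g(1)=1$. The value of $\sigma_w$ is then forced by boundedness: with $\sigma_b=0$ the recursion is $q^{l+1}=\sigma_w^2 g(1)\,q^l$, so $q^l$ stays bounded away from $0$ and $\infty$ precisely when $\sigma_w^2 g(1)=1$, i.e. $\sigma_w=\sqrt{2/(\lambda^2+\beta^2)}$. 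Collecting $\sigma_b=0$ and $\sigma_w^2=2/(\lambda^2+\beta^2)$ gives the singleton, and the converse is the direct substitution just performed.

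The main obstacle is the rigorous justification of $g'(x)=\mathbb{E}[\phi'(U)\phi'(V)]$ up to $x=1$: $\phi'$ is only piecewise constant with a jump at the origin (so $\phi''$ is a Dirac mass, as in the footnote to Lemma~\ref{lemma:infinite_K}), and naive differentiation under the integral produces a $1/\sqrt{1-x^2}$ singularity as $x\to1$. I would circumvent this with the integration-by-parts form of Price's identity, valid in the distributional sense, and verify the boundary term vanishes. A secondary point is bookkeeping the degenerate regime $\sigma_b=0$, $\sigma_w^2 g(1)<1$, where $q^l\to0$: this satisfies the first two clauses but must be excluded by reading ``$q^l$ bounded'' as convergence to a \emph{positive} limit, consistent with the standing hypothesis $q>0$ of Lemma~\ref{correlationfunction}.
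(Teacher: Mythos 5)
Your argument is correct and rests on the same key mechanism as the paper's proof: positive homogeneity of ReLU-like activations collapses the variance recursion to the affine map $q^{l+1}=\sigma_b^2+\sigma_w^2\tfrac{\lambda^2+\beta^2}{2}q^l$ and makes the derivative of the correlation map at $1$ equal to $\sigma_w^2\mathbb{E}[\phi'(Z)^2]$ independently of the variance level. The only differences are the order of deductions --- the paper imposes $\chi_1=1$ first to fix $\sigma_w^2=2/(\lambda^2+\beta^2)$ and then uses boundedness of $q^{l+1}=\sigma_b^2+q^l$ to force $\sigma_b=0$, whereas you extract $\sigma_b=0$ from the $l$-independence of $f_l'$ and then fix $\sigma_w$ by boundedness --- together with your two flagged caveats, both of which you resolve correctly (Price's identity at $x=1$ is unproblematic here since $g'(x)=\mathbb{E}[\phi'(U)\phi'(V)]$ is bounded and continuous up to $x=1$ for piecewise-constant $\phi'$, and the degenerate regime $\sigma_b=0$, $q^l\to 0$ must indeed be excluded by reading ``bounded'' as ``bounded away from $0$ and $\infty$,'' a point the paper's own proof sidesteps by fixing $\sigma_w$ first).
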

The non existence of EOC for ReLU-like activation in the sense of definition \ref{def:eoc} is due to the fact that the variance is unchanged ($q_a^l=q_a^1$) on the weak EOC, so that the limiting variance $q$ depends on $a$. However, this does not impact the analysis of the correlations, therefore, hereafter the weak EOC is also called the EOC.\\
This class of activation functions has the interesting property of preserving the  variance across layers when the network is initialized on the EOC. We show in Proposition \ref{prop:relukernel} below that, in the EOC regime, the correlations converge to 1 at a slower rate (slower than exponential). We only present the result for ReLU but the generalization to the whole class is straightforward.\\
\begin{figure*}
    \centering
    \subfigure[Tanh]{%
    \label{fig:tanh_eoc}%
    \includegraphics[width=1.8in]{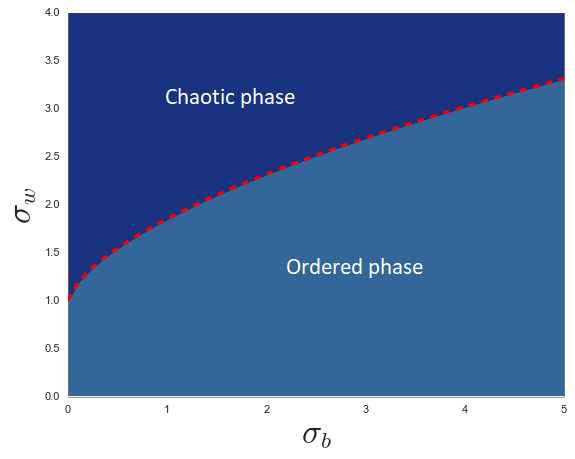}%
    }%
    \subfigure[ReLU]{%
    \label{fig:relu_eoc}%
    \includegraphics[width=1.8in]{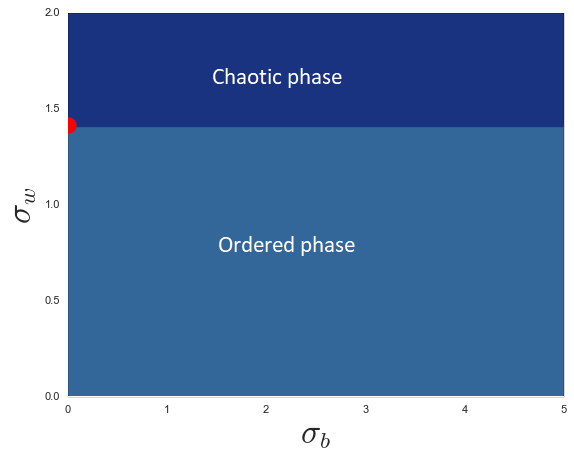}%
    }%
    \subfigure[ELU]{%
    \label{fig:elu_eoc}%
    \includegraphics[width=1.8in]{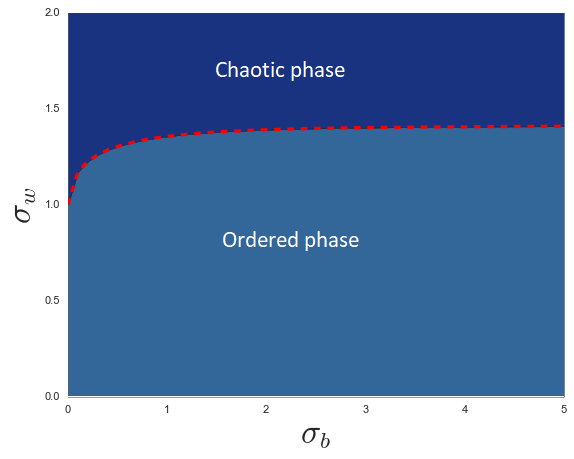}%
    }
    \caption{EOC curves for different activation functions (red dashed line). For smooth activation functions (Figures (a) and (c)), the EOC is a curve in the plane $(\sigma_b, \sigma_w)$, while it is reduced to a single point for ReLU.}
    \label{fig:eocs}
\end{figure*}

\noindent \textbf{Example: ReLU}:
The EOC is reduced to the singleton $(\sigma_b^2, \sigma_w^2) = (0,2)$, hence we should initialize ReLU networks using the parameters $(\sigma_b^2, \sigma_w^2) = (0,2)$. This result coincides with the recommendation in \cite{he2} whose objective was to make the variance constant as the input propagates but who did not analyze the propagation of the correlations. \cite{klambauer} performed a similar analysis by using the `Scaled Exponential Linear Unit' activation (SELU) that makes it possible to center the mean and normalize the variance of the post-activation $\phi(y)$. The propagation of the correlations was not discussed therein either. \\
Figure \ref{fig:correlation_function_ReLU} displays the correlation function $f$ for two different sets of parameters $(\sigma_b, \sigma_w)$. The blue graph corresponds to the EOC $(\sigma^2_b, \sigma^2_w) = (0,2)$, and the red one corresponds to an ordered phase $(\sigma_b, \sigma_w) = (1,1)$.\\
% Note that as the variance $q_a^l$ is preserved by the network ($q_a^l = q_a^1 = 2\|a\|^2/d$) and the correlations $c^l_{ab}$ converge to 1 as $l$ increases, the output function becomes close to the form $C \cdot \|a\|$ for a constant $C$.\\
In the next result, we show that a fully connected feedforward ReLU network initialized on the EOC (weak sense) acts as if it has residual connections in terms of correlation propagation. This could potentially explain why training ReLU is faster on the EOC (see experimental results). We further show that the correlations converge to 1 at a polynomial rate of $1/{l^2}$ on the EOC instead of an exponential rate in the ordered phase.

\begin{prop}[EOC acts as Residual connections] \label{prop:relukernel}
Consider a ReLU network with parameters $(\sigma_b^2, \sigma_w^2) = (0,2) \in EOC$ and correlations $c^l_{ab}$. Consider also a ReLU network with simple residual connections given by
$$ \overline{y}^l_i(a) = \overline{y}^{l-1}_i(a) +  \sum_{j=1}^{N_{l-1}} \overline{W}^l_{ij} \phi(\overline{y}^{l-1}_j(a)) + \overline{B}^l_i $$
where $\overline{W}^l_{ij} \sim \mathcal{N}(0, \frac{\overline{\sigma}_w^2}{N_{l-1}})$ and $\overline{B}^l_i \sim \mathcal{N}(0, \overline{\sigma}_b^2)$. Let $\overline{c}^l_{ab}$ be the corresponding correlation. Then, for any $\overline{\sigma}_w>0$ and $\overline{\sigma}_b =0$, there exists a constant $\gamma>0$ such that
\begin{equation*}
    1 - c^l_{ab} \sim \gamma ( 1 - \overline{c}^l_{ab} )\sim \frac{9 \pi^2}{2 l^2} \quad \text{ as } \hspace{0.2cm} l \rightarrow \infty
\end{equation*}
\end{prop}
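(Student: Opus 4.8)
The plan is to reduce the whole statement to the one-dimensional dynamical system $c^{l+1}_{ab}=f(c^l_{ab})$ supplied by Lemma \ref{correlationfunction}, and to exploit the fact that ReLU admits a closed-form correlation function. Since ReLU is positively homogeneous, $\phi(\sqrt{q}\,z)=\sqrt{q}\,\phi(z)$, the variance cancels in the definition of $f$, and the classical arc-cosine kernel identity $\mathbb{E}[\phi(Z_1)\phi(xZ_1+\sqrt{1-x^2}Z_2)]=\frac{1}{2\pi}(\sqrt{1-x^2}+x(\pi-\arccos x))$ gives, at $(\sigma_b^2,\sigma_w^2)=(0,2)$,
\[
f(x)=\tfrac{1}{\pi}\big(\sqrt{1-x^2}+x(\pi-\arccos x)\big),\qquad f'(x)=1-\tfrac{1}{\pi}\arccos x.
\]
In particular $f(1)=1$ and $f'(1)=1$, confirming the weak EOC, and $f(x)>x$ on $[0,1)$ because $h(x):=\sqrt{1-x^2}-x\arccos x$ satisfies $h(1)=0$ and $h'(x)=-\arccos x<0$; hence $c^l_{ab}\uparrow 1$ monotonically. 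The fixed point at $1$ is parabolic (tangent), which is precisely the mechanism producing sub-exponential convergence.

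The core of the argument is the local expansion near $x=1$. Writing $x=1-u$ and using $\arccos(1-u)=\sqrt{2u}\,(1+O(u))$, I would integrate $f'$ to obtain
\[
1-f(1-u)=\int_{1-u}^{1}\Big(1-\tfrac{\arccos t}{\pi}\Big)\,dt=u-\tfrac{2\sqrt{2}}{3\pi}\,u^{3/2}+o(u^{3/2}).
\]
Setting $u_l:=1-c^l_{ab}$ yields the recursion $u_{l+1}=u_l-A\,u_l^{3/2}+o(u_l^{3/2})$ with $A=\tfrac{2\sqrt{2}}{3\pi}$. The key auxiliary fact is that any positive sequence with $u_l\to 0$ obeying such a recursion satisfies $u_l\sim \tfrac{4}{A^2 l^2}$; I would prove this by passing to $w_l:=u_l^{-1/2}$, expanding $w_{l+1}-w_l=\tfrac{A}{2}+o(1)$, and applying Stolz--Cesàro to get $w_l\sim \tfrac{A}{2}l$. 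With $A=\tfrac{2\sqrt{2}}{3\pi}$ one has $4/A^2=\tfrac{9\pi^2}{2}$, so $1-c^l_{ab}\sim \tfrac{9\pi^2}{2l^2}$.

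For the residual network I would rerun the kernel computation. With $\bar\sigma_b=0$ and $\mathbb{E}[\phi(\sqrt q Z)^2]=q/2$, every variance obeys $\bar q^l=(1+\bar\sigma_w^2/2)\,\bar q^{l-1}$ with the \emph{same} factor $\rho=1+\bar\sigma_w^2/2$ for both inputs, so this common factor cancels in the correlation and the covariance recursion gives the map $\bar f(x)=\frac{x+\bar\sigma_w^2 g(x)}{1+\bar\sigma_w^2/2}$ with $g=f/2$. Since $g(1)=\tfrac12$ and $g'(1)=\tfrac12$, one checks $\bar f(1)=1$ and, crucially, $\bar f'(1)=1$ for \emph{every} $\bar\sigma_w>0$: this is exactly why residual connections sit automatically on a tangent fixed point. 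The same expansion yields $1-\bar f(1-u)=u-\bar A\,u^{3/2}+o(u^{3/2})$ with $\bar A=\frac{\sqrt{2}\,\bar\sigma_w^2}{3\pi(1+\bar\sigma_w^2/2)}$, hence $1-\bar c^l_{ab}\sim \tfrac{4}{\bar A^2 l^2}$. Dividing the two asymptotics gives $1-c^l_{ab}\sim\gamma\,(1-\bar c^l_{ab})$ with the explicit positive constant $\gamma=\bar A^2/A^2=(\bar\sigma_w^2/(2+\bar\sigma_w^2))^2$, and the chain $\gamma(1-\bar c^l_{ab})\sim \tfrac{9\pi^2}{2l^2}$ is automatic from the first computation.

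I expect the main obstacle to be the rigorous control of the error term in the recursion asymptotic: one must make precise that the $o(u_l^{3/2})$ contributions (coming from the higher-order terms of the $\arccos$ expansion, and the analogous terms in $\bar f$) remain genuinely negligible, so that the increment $w_{l+1}-w_l$ truly converges to $A/2$ and the $o(1)$ in the Stolz--Cesàro step averages to zero. This is standard but delicate, since the slow $1/l^2$ decay leaves little margin; a clean way is to sandwich $u_l$ between two exact solutions of $v_{l+1}=v_l-(A\mp\varepsilon)v_l^{3/2}$ for arbitrarily small $\varepsilon>0$ and let $\varepsilon\to0$.
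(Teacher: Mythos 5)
Your proposal is correct and follows essentially the same route as the paper's proof: the closed-form ReLU correlation map, the expansion $f(x)=x+\tfrac{2\sqrt{2}}{3\pi}(1-x)^{3/2}+O\bigl((1-x)^{5/2}\bigr)$ at the tangent fixed point, and the substitution $u_l^{-1/2}$ followed by summation of the increments (the paper's ``equivalence of positive divergent series'' is your Stolz--Ces\`aro step). If anything, your write-up is more complete where the paper only says ``using the same argument for $\overline{c}^l$'': you make the residual map $\bar f$, the modified coefficient $\bar A$, and the constant $\gamma=\bigl(\bar\sigma_w^2/(2+\bar\sigma_w^2)\bigr)^2$ explicit, and you correctly flag the error-control issue in the recursion asymptotics that the paper glosses over.
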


\subsection{Smooth activation functions} \label{sec:newclas}
We show that smooth activation functions provide better signal propagation through the network. We start by a result on the existence of the EOC.

\begin{prop}\label{prop:existence_eoc}
Let $\phi \in \mathcal{D}^1_g$ be non ReLU-like such that $\phi(0)=0$ and $\phi'(0)\neq0$. Assume that $V[\phi]$ is non-decreasing and $V[\phi']$ is non-increasing. Let $\sigma_{max}:=\sqrt{\sup_{x\geq 0}|x - \frac{V[\phi](x)}{V[\phi'](x)}|}$ and for $\sigma_b < \sigma_{max}$ let $q_{\sigma_b}$ be the smallest fixed point of the function $\sigma_b^2 + \frac{V[\phi]}{V[\phi']}$. Then we have $EOC = \{ (\sigma_b,\frac{1}{\sqrt{\mathbb{E}[\phi'(\sqrt{q_{\sigma_b}}Z)^2]}} ): \sigma_b < \sigma_{max}\}$.
\end{prop}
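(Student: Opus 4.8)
The plan is to collapse the two defining conditions of the Edge of Chaos into a single fixed-point equation for the $\sigma_w$-free function $g:=\sigma_b^2+V[\phi]/V[\phi']$, and then read off $\sigma_w$ from the remaining condition. By Definition \ref{def:eoc}, $(\sigma_b,\sigma_w)\in EOC$ means that the limiting variance $q$ exists, so that $q$ is a fixed point of $F=\sigma_b^2+V[\phi]$, and that $\chi_1=V[\phi'](q)=1$. Writing $A(x)=\mathbb E[\phi(\sqrt x Z)^2]$ and $B(x)=\mathbb E[\phi'(\sqrt x Z)^2]$, the factors $\sigma_w^2$ cancel in the ratio $V[\phi]/V[\phi']=A/B$, so $g$ depends on $\sigma_b$ alone; moreover $\chi_1=1$ reads $\sigma_w^2=1/B(q)$, and substituting this into $q=\sigma_b^2+\sigma_w^2A(q)$ yields exactly $q=\sigma_b^2+A(q)/B(q)=g(q)$. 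Hence every EOC point satisfies $q=g(q)$ and $\sigma_w=1/\sqrt{B(q)}$, which is the claimed form; the whole proof then reduces to identifying $q$ with the smallest fixed point $q_{\sigma_b}$ of $g$ and pinning down the admissible range of $\sigma_b$.

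Next I would deploy the two monotonicity hypotheses. Since $V[\phi]$ is non-decreasing, $F$ is non-decreasing, so by the Remark after Definition \ref{def:domain} the variance recursion converges to the minimal fixed point of $F$; this is what the limiting variance $q$ must be. Since $V[\phi']$ is non-increasing and $V[\phi]\ge0$, the ratio $h=V[\phi]/V[\phi']$ is non-decreasing, so $g$ is non-decreasing and continuous (the continuity of $A,B$ following from dominated convergence, using $\phi\in\mathcal D^1_g$), hence has a well-defined smallest fixed point whenever a fixed point exists. For existence I would write the fixed-point condition as $\sigma_b^2=\psi(q):=q-h(q)$; because $\phi(0)=0$ gives $h(0)=0$ and thus $\psi(0)=0$, while $\sigma_{max}^2=\sup_{x\ge0}|\psi(x)|$, the intermediate value theorem produces a solution with $\psi(q)=\sigma_b^2$ for every $\sigma_b<\sigma_{max}$, and I take $q_{\sigma_b}$ to be the smallest such $q$ (positive as soon as $\sigma_b>0$, since then $g(0)=\sigma_b^2>0$). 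Because $\phi'(0)\neq0$ we have $B(q_{\sigma_b})>0$, so $\sigma_w=1/\sqrt{B(q_{\sigma_b})}$ is finite.

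The heart of the argument is the two-way matching between the smallest fixed point of $g$ and the minimal fixed point of $F$, and the key inequality is that for $x<q_{\sigma_b}$ the non-increasing $V[\phi']$ with $V[\phi'](q_{\sigma_b})=1$ gives $V[\phi'](x)\ge1$, hence $h(x)\le V[\phi](x)$ and therefore $g(x)\le F(x)$. For the inclusion $\supseteq$ (the construction) this closes cleanly: minimality of $q_{\sigma_b}$ together with $g(0)>0$ forces $g(x)>x$ on $[0,q_{\sigma_b})$, so $F(x)\ge g(x)>x$ there, which shows $q_{\sigma_b}$ is the minimal fixed point of $F$, hence the true limiting variance, so $(\sigma_b,1/\sqrt{B(q_{\sigma_b})})\in EOC$. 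For the reverse inclusion one must check that an arbitrary EOC point coincides with this one: its limiting variance $q$ is the minimal fixed point of $F$ and, by the reduction above, a fixed point of $g$, so it suffices to exclude $q>q_{\sigma_b}$. This is the step I expect to be the main obstacle, because the inequality $g\le F$ runs the wrong way to transfer minimality from $F$ back to $g$; I would close it by ruling out a second fixed point of $g$ that is attracting for its own variance map, i.e. by showing the monotonicity hypotheses make the positive fixed point of $g$ unique (so that smallest equals only), equivalently that $\psi$ is increasing across the relevant range. Finally I would record that $\sigma_b^2=\psi(q_{\sigma_b})\le\sigma_{max}^2$ delimits the parameter domain, giving the stated parametrization of $EOC$ by $\sigma_b<\sigma_{max}$.
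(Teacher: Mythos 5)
Your reduction of the EOC conditions to the fixed-point equation $q=\sigma_b^2+V[\phi](q)/V[\phi'](q)$, and your key inequality --- $V[\phi']$ non-increasing together with $V[\phi'](q_{\sigma_b})=1$ gives $V[\phi'](x)\geq 1$ on $[0,q_{\sigma_b}]$, hence $g\leq F$ there, so minimality transfers from the smallest fixed point of $g$ to the variance map $F$ --- is exactly the paper's route; that transfer step is what the paper packages as ``permissible couples'' (its Corollary 1.2, proved from $V[\phi']$ non-increasing). There is, however, one concrete missing ingredient in your existence step. Your intermediate value argument needs $\psi(x)=x-V[\phi](x)/V[\phi'](x)$ to actually attain values above $\sigma_b^2$, and since $\sigma_{max}^2$ is defined as $\sup_{x\geq0}|\psi(x)|$, this requires knowing $\psi\geq 0$, i.e.\ $\mathbb{E}[\phi(\sqrt{x}Z)^2]\leq x\,\mathbb{E}[\phi'(\sqrt{x}Z)^2]$ for all $x\geq 0$. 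This is not free: the paper devotes a separate lemma to it (write $\phi(\sqrt{x}z)=\sqrt{x}\int_0^z\phi'(\sqrt{x}u)\,du$, apply Cauchy--Schwarz and Gaussian integration by parts), and the equality case of that Cauchy--Schwarz is precisely what characterizes ReLU-like functions --- so this lemma is also where the ``non ReLU-like'' hypothesis earns its keep, guaranteeing $\sigma_{max}>0$ and that the parametrized set is nonempty. Without it your IVT step does not close.

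On the reverse inclusion, your diagnosis is accurate: the inequality $g\leq F$ on $[0,q]$ runs the wrong way to force the limiting variance of an arbitrary EOC point down to $q_{\sigma_b}$, and uniqueness of the positive fixed point of $g$ does not follow from the stated monotonicity hypotheses ($\psi$ continuous, nonnegative and vanishing at $0$ can cross the level $\sigma_b^2$ several times). You should not expect to close this gap by proving $\psi$ increasing. But you are not worse off than the paper here: its official proof (``straightforward from Corollary A.3'') only establishes the inclusion $\supseteq$, treating the fixed-point equation as a necessary condition for the other direction and leaving possible multiplicity of fixed points unaddressed. In short, your proposal matches the paper on everything the paper actually proves; the one step you must still supply is the pointwise inequality $V[\phi]/V[\phi']\leq\mathrm{id}$.
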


{\it Example :} Tanh and ELU (defined by $\phi_{ELU}(x) = x$ for $x\geq0$ and $\phi_{ELU}(x) = e^x - 1$ for $x<0$) satisfy all conditions of Proposition \ref{prop:existence_eoc}. We prove in the Appendix that SiLU (a.k.a Swish) has an EOC.\\
Using Proposition \ref{prop:existence_eoc}, we propose Algorithm \ref{alg:eoc} to determine the EOC curves.\\
Figure \ref{fig:eocs} shows the EOC curves for different activation functions. For ReLU, the EOC is reduced to a point while smooth activation functions have an EOC curve (ELU is a smooth approximation of ReLU). \\
\begin{algorithm}[tb]
   \caption{EOC curve}
   \label{alg:eoc}
\begin{algorithmic}
   \STATE {\bfseries Input:} $\phi$ satisfying conditions of Proposition \ref{prop:existence_eoc}, $\sigma_b$
   \STATE Initialize $q = 0$
   \WHILE{$q$ has not converged}
   \STATE $q = \sigma_b^2 + \frac{V[\phi](q)}{V[\phi'](q)}$
   \ENDWHILE
   \STATE {\bf return} ($\sigma_b, \frac{1}{\sqrt{V[\phi'](q)}}$)
\end{algorithmic}
\end{algorithm}
A natural question which arises from the analysis above is whether we can have $\sigma_{max} = \infty$. The answer is yes for the following large class of `Tanh-like' activation functions.

\begin{definition}[Tanh-like activation functions]
Let $\phi \in \mathcal{D}^2(\mathbb{R},\mathbb{R})$. $\phi$ is Tanh-like if
\begin{enumerate}
\item $\phi$ bounded, $\phi(0)=0$, and for all $x\in \mathbb{R}$, $\phi'(x)\geq0$, $x \phi''(x)\leq0$ and $x\phi(x)\geq0$.
\item There exist $\alpha>0$ such that $|\phi'(x)| \gtrsim e^{-\alpha |x|}$ for large $x$ (in norm).
\end{enumerate}
\end{definition}

\begin{lemma}
Let $\phi$ be a Tanh-like activation function, then $\phi$ satisfies all conditions of Proposition \ref{prop:existence_eoc} and $EOC = \{ (\sigma_b,\frac{1}{\sqrt{\mathbb{E}[\phi'(\sqrt{q}Z)^2]}} ): \sigma_b \in \mathbb{R}^+ \}$.
\end{lemma}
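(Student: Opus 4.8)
The plan is to check, one hypothesis at a time, that a Tanh-like $\phi$ meets every requirement of Proposition~\ref{prop:existence_eoc}, and then to establish the single extra fact $\sigma_{max}=\infty$; the stated form of the EOC is then read off directly from that proposition. The elementary hypotheses are quickly dispatched. Boundedness of $\phi$ gives $\mathbb{E}[\phi(Z)^2]<\infty$, while the conditions $\phi'\geq 0$ and $x\phi''(x)\leq 0$ make $\phi'$ non-increasing on $[0,\infty)$ and non-decreasing on $(-\infty,0]$, so $\phi'$ peaks at $0$ and is bounded, giving $\mathbb{E}[\phi'(Z)^2]<\infty$ and hence $\phi\in\mathcal{D}^1_g$. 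Condition~2 forces $\phi'$ to be strictly positive for large $|x|$, and since $\phi'(0)=\max_x\phi'(x)$ this yields $\phi'(0)>0$, so together with the assumed $\phi(0)=0$ the point-conditions hold; finally a bounded non-constant function cannot have the piecewise-linear ReLU-like form, so $\phi$ is non ReLU-like.

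Next I would establish the two monotonicity hypotheses by differentiating under the expectation. With $W=\sqrt{x}Z$ and $\tfrac{d}{dx}\phi(\sqrt{x}Z)^2=\tfrac{Z}{\sqrt{x}}\phi(W)\phi'(W)$, Gaussian rescaling gives
\[
\tfrac{d}{dx}\,\mathbb{E}[\phi(\sqrt{x}Z)^2]=\tfrac{1}{x}\,\mathbb{E}\!\left[W\phi(W)\phi'(W)\right]\ge 0,
\]
since $\phi'\ge 0$ and $x\phi(x)\ge 0$ make the integrand non-negative; thus $V[\phi]$ is non-decreasing. The same computation applied to $\phi'$ yields
\[
\tfrac{d}{dx}\,\mathbb{E}[\phi'(\sqrt{x}Z)^2]=\tfrac{1}{x}\,\mathbb{E}\!\left[W\phi'(W)\phi''(W)\right]\le 0,
\]
because $\phi'\ge 0$ and $x\phi''(x)\le 0$ make the integrand non-positive; thus $V[\phi']$ is non-increasing. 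The exchange of derivative and expectation is justified by dominated convergence using the boundedness of $\phi,\phi'$ (and integrability of $\phi''$) against the Gaussian weight.

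The heart of the argument, and the step I expect to be the main obstacle, is showing $\sigma_{max}=\infty$, equivalently that $r(x):=x-V[\phi](x)/V[\phi'](x)$ is unbounded above on $[0,\infty)$. As $x\to\infty$, boundedness of $\phi$ makes $V[\phi](x)$ converge to a finite positive constant, whereas $V[\phi'](x)\to 0$ because $\phi'\to0$ at $\pm\infty$; the delicate point is the decay rate of the denominator. Here Condition~2 is decisive: $|\phi'(u)|\gtrsim e^{-\alpha|u|}$ gives $\mathbb{E}[\phi'(\sqrt{x}Z)^2]\gtrsim \mathbb{E}[e^{-2\alpha\sqrt{x}|Z|}]\asymp x^{-1/2}$ (a Laplace-type estimate), so $V[\phi']$ decays only polynomially and $V[\phi](x)/V[\phi'](x)=O(\sqrt{x})$. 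Hence $r(x)\ge x-C\sqrt{x}\to+\infty$, so $\sup_{x\ge0}r(x)=\infty$ and $\sigma_{max}=\infty$. I stress that this is precisely where a merely bounded, increasing, peaked-at-zero $\phi$ would not suffice: without the lower bound of Condition~2, $\phi'$ could decay super-polynomially (as for error-function-type activations), forcing the ratio to outgrow $x$ and making $\sigma_{max}$ finite.

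It remains only to assemble the pieces. Since $V[\phi']>0$ everywhere (as $\phi'(0)>0$ and the Gaussian has full support), $r$ is continuous with $r(0)=0$ and, by the previous paragraph, unbounded above; the intermediate value theorem then supplies, for every $\sigma_b\in\mathbb{R}^+$, a smallest fixed point $q_{\sigma_b}$ of $\sigma_b^2+V[\phi]/V[\phi']$. All hypotheses of Proposition~\ref{prop:existence_eoc} now hold and $\sigma_{max}=\infty$, so its conclusion reads $EOC=\{(\sigma_b,\,1/\sqrt{\mathbb{E}[\phi'(\sqrt{q_{\sigma_b}}Z)^2]}):\sigma_b\in\mathbb{R}^+\}$, which is the claimed identity.
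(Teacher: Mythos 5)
Your proof is correct and follows essentially the same route as the paper's: the monotonicity of $V[\phi]$ and $V[\phi']$ via the sign conditions $x\phi(x)\geq 0$, $\phi'\geq 0$, $x\phi''(x)\leq 0$ after Gaussian differentiation under the expectation, and $\sigma_{max}=\infty$ via the Laplace-type lower bound $\mathbb{E}[\phi'(\sqrt{x}Z)^2]\gtrsim x^{-1/2}$ from Condition~2, giving $x - V[\phi](x)/V[\phi'](x)\geq x - k\sqrt{x}\to\infty$. The only difference is that you spell out the verification of the elementary hypotheses ($\phi\in\mathcal{D}^1_g$, $\phi'(0)\neq 0$, non ReLU-like), which the paper leaves implicit.
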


Recall that the convergence rate of the correlation to 1 for ReLU-like activations on the EOC is $\mathcal{O}(1/\ell^2)$. We can improve this rate by taking a sufficiently regular activation function. Let us first define a regularity class $\mathcal{A}$.

\begin{definition}
Let $\phi \in \mathcal{D}^2_{g}$. We say that $\phi$ is in $\mathcal{A}$ if there exists $n\geq 1$, a partition $(S_i)_{1\leq i \leq n}$ of $\mathbb{R}$ and $g_1, g_2, ..., g_n \in \mathcal{C}^{2}_g$ such that $\phi^{(2)} = \sum_{i=1}^n 1_{S_i} g_i$.
\end{definition}
This class includes activations such as Tanh, SiLU, ELU (with $\alpha=1$). Note that $\mathcal{D}^k_{g} \subset \mathcal{A}$ for all $k\geq3$. \\
For activation functions in $\mathcal{A}$, the next proposition shows that the correlation converges to 1 at the rate $\mathcal{O}(1/\ell)$ which is better than  $\mathcal{O}(1/\ell^2)$ of ReLU-like activation functions.

\begin{prop}[Convergence rate for smooth activations]
\label{prop:rate_smooth_functions}
Let $\phi \in \mathcal{A}$ such that $\phi$ is non-linear (i.e. $\phi^{(2)}$ is non-identically zero). Then, on the EOC, we have $1 - c^l \sim \frac{\beta_q}{l}$ where $\beta_q = \frac{2 \mathbb{E}[\phi'(\sqrt{q}Z)^2]}{q \mathbb{E}[\phi''(\sqrt{q} Z)^2]}$
\end{prop}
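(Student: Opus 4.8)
The plan is to study the scalar recursion $c^{l+1}_{ab} = f(c^l_{ab})$ supplied by Lemma \ref{correlationfunction}, which applies since $\phi \in \mathcal{A} \subset \mathcal{D}^2_g$ meets its growth hypothesis, and to extract a precise second-order expansion of the correlation function $f$ at its fixed point $x=1$. Writing $u_l := 1 - c^l_{ab}$, I would reduce the claim to three ingredients: (i) $u_l \to 0$; (ii) the endpoint expansion $f(1-u) = 1 - u + \tfrac12 f''(1) u^2 + o(u^2)$ with $f''(1) > 0$; and (iii) the elementary asymptotics of a recursion of the form $u_{l+1} = u_l - C u_l^2 + o(u_l^2)$.

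First I would compute the derivatives of $f$ at $x=1$. Applying Gaussian integration by parts (Price's theorem) to $f(x) = q^{-1}\big(\sigma_b^2 + \sigma_w^2\,\mathbb{E}[\phi(\sqrt q Z_1)\phi(\sqrt q\,U_2(x))]\big)$ and differentiating under the expectation yields $f'(x) = \sigma_w^2\,\mathbb{E}[\phi'(\sqrt q Z_1)\phi'(\sqrt q\,U_2(x))]$ and $f''(x) = \sigma_w^2 q\,\mathbb{E}[\phi''(\sqrt q Z_1)\phi''(\sqrt q\,U_2(x))]$. Since $U_2(1) = Z_1$, evaluating at $x=1$ gives $f'(1) = \sigma_w^2\mathbb{E}[\phi'(\sqrt q Z)^2] = \chi_1 = 1$, which is exactly the EOC condition of Definition \ref{def:eoc}, together with $f''(1) = \sigma_w^2 q\,\mathbb{E}[\phi''(\sqrt q Z)^2]$. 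Substituting $\sigma_w^2 = 1/\mathbb{E}[\phi'(\sqrt q Z)^2]$ (again the EOC constraint) gives $f''(1) = q\mathbb{E}[\phi''(\sqrt q Z)^2]/\mathbb{E}[\phi'(\sqrt q Z)^2] = 2/\beta_q$, and $f''(1) > 0$ precisely because $\phi$ is non-linear, i.e. $\phi'' \not\equiv 0$.

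The delicate point here, and the reason the class $\mathcal{A}$ is imposed, is to justify that $f$ extends to a genuinely $C^2$ function up to the boundary $x=1$, so that these identities hold and the $o(u^2)$ Taylor remainder is controlled. Expanding $\psi := \phi(\sqrt q\,\cdot)$ in the Hermite basis, $\psi = \sum_k a_k H_k$, the Mehler formula gives $f(x) = q^{-1}\big(\sigma_b^2 + \sigma_w^2 \sum_k a_k^2 x^k\big)$, whence $\sum_k k(k-1) a_k^2 = \mathbb{E}[\psi''(Z)^2] = q^2\mathbb{E}[\phi''(\sqrt q Z)^2]$. Membership in $\mathcal{A} \subset \mathcal{D}^2_g$ guarantees $\mathbb{E}[\phi''(\sqrt q Z)^2] < \infty$, so this series converges; since all coefficients $a_k^2$ are nonnegative, the series for $f$, $f'$ and $f''$ converge uniformly on $[0,1]$, giving $f \in C^2([0,1])$ and validating (ii) by Taylor with continuous second derivative. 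The piecewise-$\mathcal{C}^2_g$ structure of $\phi''$ built into $\mathcal{A}$ is what allows the integration-by-parts and differentiation-under-the-expectation steps to be carried out rigorously when $\phi''$ has jump discontinuities (as for ELU). I expect this boundary-regularity step to be the main obstacle.

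It remains to handle (i) and (iii), which are routine. For (i), nonnegativity of the Hermite coefficients makes $f$ increasing and convex on $[0,1]$; since $f(1) = 1$ and $f'(1) = 1$, convexity places $f$ above its tangent line $y = x$ at $1$, so $f(x) > x$ on $[0,1)$ with strict inequality by non-linearity. Hence $c^l$ is strictly increasing and bounded above by $1$, forcing $u_l \downarrow 0$. Feeding the expansion of (ii) into the recursion gives $u_{l+1} = 1 - f(1 - u_l) = u_l - \tfrac12 f''(1) u_l^2 + o(u_l^2)$, a map of the form in (iii) with $C = \tfrac12 f''(1) = 1/\beta_q > 0$. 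The standard argument, namely $1/u_{l+1} - 1/u_l = C + o(1)$ followed by a Cesàro average, then yields $u_l \sim 1/(Cl)$, i.e. $1 - c^l_{ab} \sim \beta_q/l$, as claimed.
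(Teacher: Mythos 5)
Your proposal is correct, and it reaches the same three milestones as the paper's proof ($c^l\uparrow 1$; a second-order expansion of $f$ at $x=1$ with $f''(1)=2/\beta_q$; the standard $u_{l+1}=u_l-Cu_l^2+o(u_l^2)\Rightarrow u_l\sim 1/(Cl)$ argument), but it handles the one genuinely delicate step --- regularity of $f$ up to the endpoint $x=1$ --- by a different and arguably cleaner route. The paper differentiates under the expectation a third time, shows $f^{(3)}(x)=g(x)/\sqrt{1-x^2}$ with $g\in\mathcal{C}^1$ by exploiting the piecewise-$\mathcal{C}^2_g$ structure built into $\mathcal{A}$, and then recovers an expansion with remainder $O((1-x)^{5/2})$ via the substitution $x=1-t^2$; you instead use the Mehler/Hermite representation $f(x)=q^{-1}(\sigma_b^2+\sigma_w^2\sum_k a_k^2x^k)$, note that the nonnegativity of the coefficients together with $\sum_k k(k-1)a_k^2=\mathbb{E}[\psi''(Z)^2]<\infty$ gives uniform convergence of $f''$ on $[0,1]$ (Abel), hence $f\in C^2([0,1])$ and a Peano remainder $o(u^2)$, which is all the recursion argument needs. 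Your route also buys two things: the monotonicity step ($f(x)>x$ on $[0,1)$) falls out of convexity of a power series with nonnegative coefficients touching its tangent $y=x$ at $1$, rather than the paper's Cauchy--Schwarz bound $f'(x)\le 1$ with equality analysis; and the piecewise decomposition defining $\mathcal{A}$ becomes essentially irrelevant --- all you really use is that $\phi'$ is absolutely continuous with $\phi''\in L^2$ against the Gaussian at variance $q$, so that term-by-term differentiation of the Hermite series is valid (Parseval for $\psi''$). The only point worth spelling out if you write this up is precisely that last identity, $\mathbb{E}[\psi''(Z)^2]=\sum_k k(k-1)a_k^2$: it requires the Gaussian integration by parts $\mathbb{E}[\psi''h_m]=\sqrt{m+1}\,\mathbb{E}[\psi'h_{m+1}]$ with vanishing boundary terms, which is where the integrability hypotheses of $\mathcal{D}^2_g$ (at scale $\sqrt{q}$) actually enter; once that is checked, your argument is complete and slightly more general than the paper's.
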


Choosing a smooth activation function is therefore better for deep neural networks since it provides deeper information propagation. This could explain for example why smooth versions of ReLU such as ELU perform better \cite{clevert} (see experimental results). Figure \ref{fig:convergence_corre_smoothness} shows the evolution of the correlation through the network layers for different activation functions. For function in $\mathcal{A}$ (Tanh and ELU), the graph shows a rate of $\mathcal{O}(1/\ell)$ as expected compared to $\mathcal{O}(1/\ell^2)$ for ReLU. 
%For $\ell = 20$, we would loose $\sim95\%$ of the information using a ReLU activation compared to $\sim 68\%$ for Tanh and ELU.
\begin{figure}
    \centering
    \label{fig:convergence_corre_smoothness}%
    \includegraphics[width=2.5in]{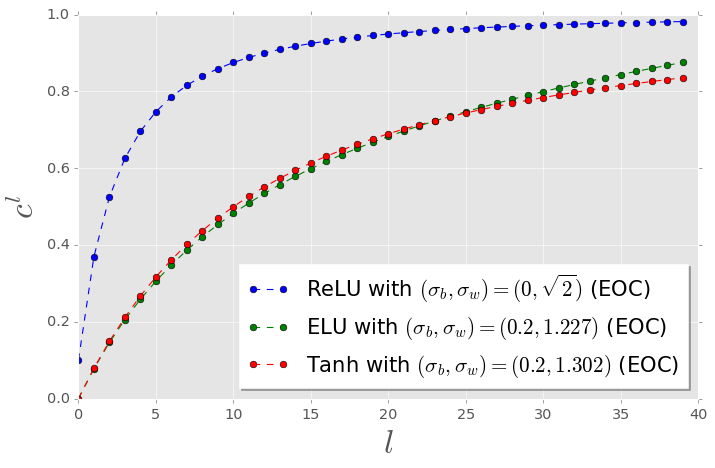}%
    \caption{Impact of the smoothness of the activation function on the convergence of the correlations on the EOC. The convergence rate for ReLU is $\mathcal{O}(1/\ell^2)$ and $\mathcal{O}(1/\ell)$ for Tanh and ELU.}
    \label{fig:conv_corr_eoc}
\end{figure}

So far, we have  discussed the impact of the EOC and the smoothness of the activation function on the behaviour of $c^l$. We now refine this analysis by studying $\beta_q$ as a function of $(\sigma_b, \sigma_w)$. We also show that $\beta_q$ plays a more important role in the information propagation process. Indeed, we show that $\beta_q$ controls the propagation of the correlation and the back-propagation of the Gradients. For the back-propagation part, we use the approximation that the weights used during forward propagation are independent of the weights used during backpropagation. This simplifies the calculations for the gradient backpropagation; see \cite{samuel} for details and \cite{yang2} for a theoretical justification.

\begin{prop}\label{prop:eoc_approximation}
Let $\phi \in \mathcal{A}$ be a non-linear activation function such that $\phi(0)=0$, $\phi'(0)\neq0$. Assume that $V[\phi]$ is non-decreasing  and $V[\phi'])$ is non-increasing, and let $\sigma_{max}>0$ be defined as in Proposition \ref{prop:existence_eoc}. Let $E$ be a differentiable loss function and define the gradient with respect to the $l^{th}$ layer by $\frac{\partial E}{\partial y^l} = (\frac{\partial E}{\partial y^l_i})_{1\leq i\leq N_l}$ and let $\Tilde{Q}^l_{ab} = \mathbb{E}[\frac{\partial E}{\partial y^l(a)} ^{T} \frac{\partial E}{\partial y^l(b)}]$ (Covariance matrix of the gradients during backpropagation). Recall that $\beta_q = \frac{2 \mathbb{E}[\phi'(\sqrt{q}Z)^2]}{q \mathbb{E}[\phi''(\sqrt{q} Z)^2]}$.

Then, for any $\sigma_b<\sigma_{max}$, by taking $(\sigma_b, \sigma_w) \in EOC$ we have
\begin{itemize}
    \item $\sup_{x \in [0,1]} |f(x) - x| \leq \frac{1}{\beta_q} $
    \item For $l \geq 1$, $\huge| \frac{\Tr(\Tilde{Q}^l_{ab})}{\Tr(\Tilde{Q}^{l+1}_{ab})} - 1\huge| \leq \frac{2}{\beta_q} $
\end{itemize}
Moreover, we have
\begin{equation*}
    \lim_{\substack{\sigma_b \rightarrow 0 \\ (\sigma_b, \sigma_w) \in \textrm{EOC}}} \beta_q = \infty.
\end{equation*}
\end{prop}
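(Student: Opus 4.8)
\textbf{Proof proposal for Proposition~\ref{prop:eoc_approximation}.}

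The plan is to treat the three assertions separately, since the first two are uniform bounds controlled by $\beta_q$ while the last is an asymptotic statement about the limit $\sigma_b \to 0$ along the EOC. For the first bullet, I would exploit the structure of the correlation function $f$ from Lemma~\ref{correlationfunction}. On the EOC we know $f'(1)=1$ and, since $f$ is convex and increasing on $[0,1]$ with $f(1)=1$, the quantity $f(x)-x$ is nonnegative and maximized somewhere in the interior. The key computation is to relate the ``gap'' $\sup_{x\in[0,1]}(f(x)-x)$ to the curvature of $f$ near $1$. A Taylor expansion of $f$ at $x=1$ gives $f(x) = x + \tfrac{1}{2}f''(1)(1-x)^2 + o((1-x)^2)$, and I expect $f''(1)$ to be expressible in terms of $\mathbb{E}[\phi''(\sqrt{q}Z)^2]$ and $q$, so that $f''(1)$ is essentially $2/\beta_q$. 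Bounding $f(x)-x$ by its second-order behaviour over the whole interval (using convexity to control the remainder) should yield the clean bound $\sup_{x\in[0,1]}|f(x)-x| \leq 1/\beta_q$.

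For the second bullet I would first write down the backpropagation recursion for the gradient covariance. Under the standard independence approximation (citing \cite{samuel} and \cite{yang2} as the excerpt does), the trace $\Tr(\tilde{Q}^l_{ab})$ satisfies a recursion of the form $\Tr(\tilde{Q}^l_{ab}) = \sigma_w^2\,\mathbb{E}[\phi'(\sqrt{q}Z_1)\phi'(\sqrt{q}U_2(c^l_{ab}))]\,\Tr(\tilde{Q}^{l+1}_{ab})$, so that the ratio $\Tr(\tilde{Q}^l_{ab})/\Tr(\tilde{Q}^{l+1}_{ab})$ equals a ``gradient correlation factor'' evaluated at $c^l_{ab}$. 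On the EOC this factor equals $1$ when $c^l_{ab}=1$ (by the definition $\chi_1=1$), so I would show the deviation of the factor from $1$ is controlled by $1-c^l_{ab}$ together with a derivative bound, and then tie that derivative back to the same second-order quantity governing $\beta_q$. The factor of $2$ in the bound $2/\beta_q$ should emerge naturally from a first-order expansion of this gradient factor in $(1-c^l_{ab})$ combined with the first-bullet estimate.

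For the final limit, I would return to the explicit formula $\beta_q = \tfrac{2\mathbb{E}[\phi'(\sqrt{q}Z)^2]}{q\,\mathbb{E}[\phi''(\sqrt{q}Z)^2]}$ and analyze $q=q_{\sigma_b}$ as $\sigma_b \to 0$. From Proposition~\ref{prop:existence_eoc}, $q_{\sigma_b}$ is the smallest fixed point of $\sigma_b^2 + V[\phi]/V[\phi']$, and as $\sigma_b \to 0$ this fixed point equation degenerates; using $\phi(0)=0$ and $\phi'(0)\neq 0$ I expect $q_{\sigma_b}\to 0$. The heart of the limit is then a small-$q$ expansion: as $q\to 0$, $\mathbb{E}[\phi'(\sqrt{q}Z)^2]\to\phi'(0)^2>0$ stays bounded away from zero, while $q\,\mathbb{E}[\phi''(\sqrt{q}Z)^2]\to 0$ because of the explicit factor $q$ (and since $\phi''$ has finite second Gaussian moment, the expectation remains bounded). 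Hence the ratio diverges, giving $\beta_q\to\infty$.

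\textbf{Main obstacle.} I expect the hardest part to be the first bullet: turning the pointwise curvature information $f''(1)\approx 2/\beta_q$ into a bound valid \emph{uniformly} over all of $[0,1]$ rather than just near $x=1$. This requires carefully controlling the convexity/monotonicity of $f$ across the whole interval — in particular ruling out that $f(x)-x$ grows beyond its local estimate away from the endpoint — which is where the structural assumptions on $\phi\in\mathcal{A}$ (finite Gaussian moments of $\phi''$ and the piecewise $\mathcal{C}^2$ decomposition) must be invoked to guarantee that $f''$ behaves well throughout $[0,1]$.
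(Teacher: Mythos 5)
Your route is the same as the paper's: a Taylor expansion of $f$ at $x=1$ with $f'(1)=1$ on the EOC for the first bullet, the backpropagation recursion $\Tr(\Tilde{Q}^l_{ab})/\Tr(\Tilde{Q}^{l+1}_{ab}) = f'(c^l_{ab})$ under the weight-independence approximation for the second, and $q_{\sigma_b}\to 0$ with $\mathbb{E}[\phi'(\sqrt{q}Z)^2]\to\phi'(0)^2>0$ while $q\,\mathbb{E}[\phi''(\sqrt{q}Z)^2]\to 0$ for the limit. The second and third parts are essentially complete as sketched.

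The one genuine gap is exactly the step you flag as the main obstacle, and your proposed fix (convexity of $f$) does not close it. Convexity gives you $f(x)\geq x$ and control of $f$ by its chords from below, but the bound you need is from above: writing the Lagrange remainder, $|f(x)-x|\leq \frac{(1-x)^2}{2}\sup_{\theta\in[0,1]}|f''(\theta)|$, so everything reduces to showing $\sup_{\theta\in[0,1]}|f''(\theta)|\leq f''(1)=\frac{2}{\beta_q}$, and convexity of $f$ says nothing about the monotonicity of $f''$. The paper closes this in one line with Cauchy--Schwarz: since $Z_1$ and $U_2(\theta)=\theta Z_1+\sqrt{1-\theta^2}Z_2$ are each standard Gaussian, $|f''(\theta)|=\sigma_w^2 q\,|\mathbb{E}[\phi''(\sqrt{q}Z_1)\phi''(\sqrt{q}U_2(\theta))]|\leq \sigma_w^2 q\,\mathbb{E}[\phi''(\sqrt{q}Z)^2]$, which on the EOC equals $2/\beta_q$. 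The same uniform bound on $f''$ is what you need in the second bullet as well, via $|f'(x)-1|=|f'(x)-f'(1)|\leq(1-x)\sup|f''|\leq 2/\beta_q$; the factor $2$ comes directly from $f''(1)=2/\beta_q$ rather than from combining with the first estimate. With that Cauchy--Schwarz observation inserted, your argument is the paper's argument.
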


\begin{figure*}
    \centering
    \subfigure[ELU]{%
    \label{fig:training_elu}%
    \includegraphics[width=1.8in]{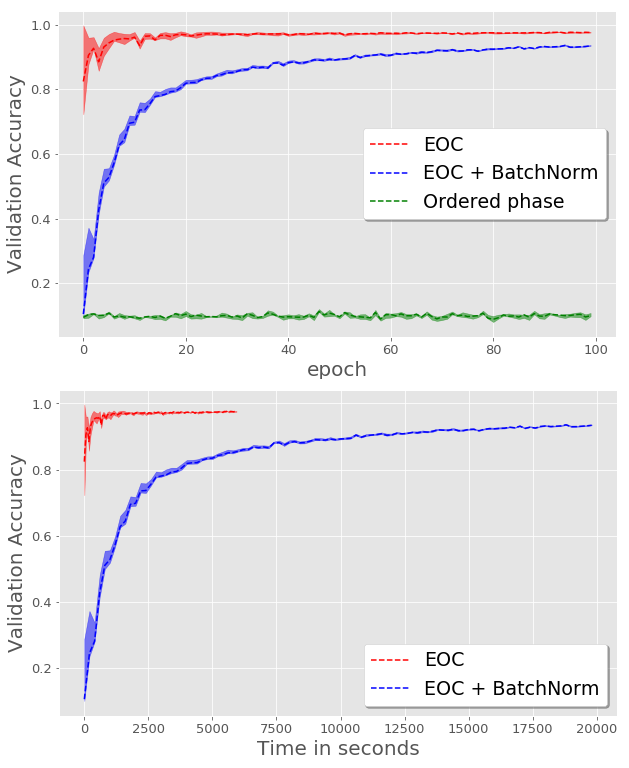}%
    }%
    \subfigure[ReLU]{%
    \label{fig:training_relu}%
    \includegraphics[width=1.8in]{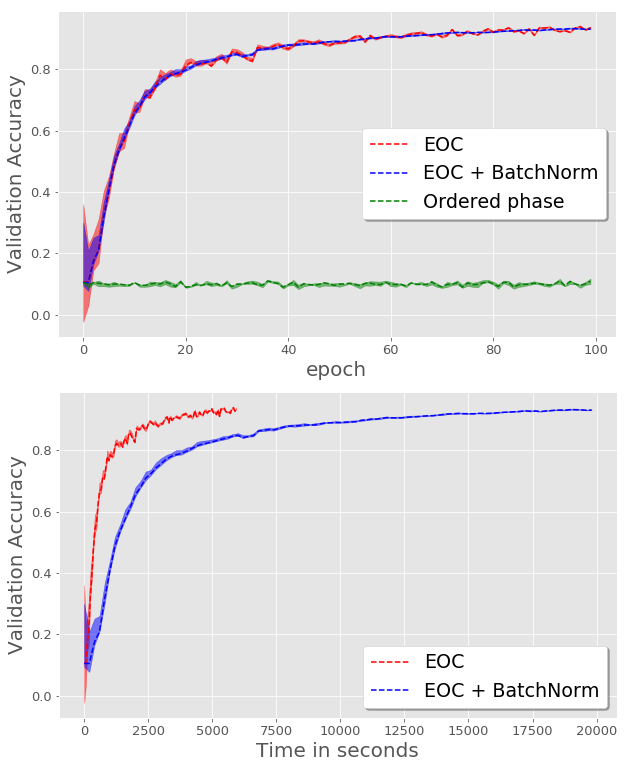}%
    }%
    \subfigure[Tanh]{%
    \label{fig:training_tanh}%
    \includegraphics[width=1.8in]{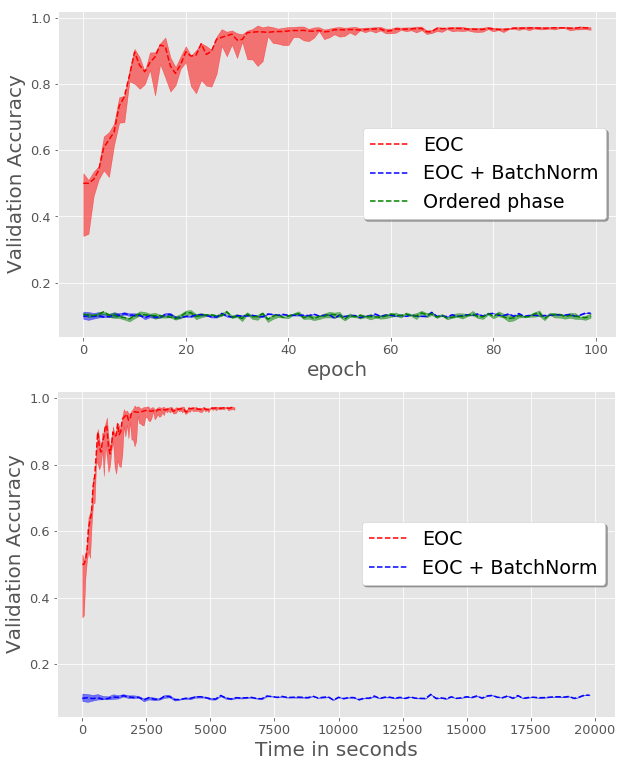}%
    }
    \caption{100 epochs of the training curve (test accuracy) for different activation functions for depth 200 and width 300 using SGD. The red curves correspond to the EOC, the green ones corresponds to an ordered phase, and the blue curves corresponds to an Initialization on the EOC plus a Batch Normalization after each layer. The upper figures show the test accuracies with respect to the epochs while the lower figures show the accuracies with respect to time.}
    \label{fig:training_300x200}
\end{figure*}

The result of Proposition \ref{prop:eoc_approximation} suggests that by taking small $\sigma_b$, we can achieve two important things. First, it makes the function $f$ close to the identity function, this slows further the convergence of the correlations to 1, i.e., the information propagates deeper inside the network. Note that the only activation functions satisfying $f(x)=x$ for all $x \in [0,1]$ are linear functions which are not useful. Second, it makes the Trace of the covariance matrix of the gradients approximately constant through layers, which means, we avoid vanishing of the information during backpropagation (More precisely, we preserve the overall spectrum of the covariance matrix since the Trace is the sum of the eigenvalues). \\
We also have $\lim_{\sigma_b \rightarrow 0} q = 0$ so that if $\sigma_b$ too small then $y^l(a) \approx 0$. Hence, a trade-off has to be taken into account when initializing on the EOC. Using Proposition \ref{prop:eoc_approximation}, we can deduce the maximal depth to which the correlations can propagate without being within a distance $\epsilon$ to 1. Indeed, we have for all $l$, $|c^{l+1} - c^l| \leq \frac{1}{\beta_q}$, therefore for $L\geq1$, $|c^L - c^0| \leq \frac{L}{\beta_q}$. Assuming $c^0 < c <1$ for all inputs where $c$ is a constant, the maximal depth we can reach without loosing $(1-\epsilon)\times 100\%$ of the information is $L_{max} = \lfloor{\beta_q (1 - c - \epsilon)}\rfloor$, this satisfies $\lim_{\sigma_b \rightarrow 0} L_{max} = \infty$.\\

{\bf Choice of $\sigma_b$ on the Edge of Chaos : }
Given a network of depth $L$, it follows that selecting a value of $\sigma_b$ on the EOC such that $\beta_q \approx L$ appears appropriate.\\

We verify numerically the benefits of this rule in the next section.\\
Note that ReLU-like activation functions do not satisfy conditions of Proposition \ref{prop:eoc_approximation}. The next lemma gives easy-to-verify sufficient conditions for Proposition \ref{prop:eoc_approximation}.

\begin{lemma}\label{lemma:easy_conditions}
Let $\phi \in \mathcal{A}$ such that $x \phi(x) \phi'(x) \geq 0$ and $\phi(x) \phi''(x) \leq 0$ for all $x \in \mathbb{R}$. Then, $\phi$ satisfies all conditions of Proposition \ref{prop:eoc_approximation}.
\end{lemma}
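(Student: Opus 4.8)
The plan is to observe that, among the hypotheses of Proposition \ref{prop:eoc_approximation}, the conditions $\phi \in \mathcal{A}$, $\phi(0)=0$, $\phi'(0)\neq 0$, non-linearity of $\phi$, and $\sigma_{max}>0$ are directly verifiable for a given activation and are taken as maintained; the substantive content of the lemma is that the two pointwise sign conditions $x\phi(x)\phi'(x)\geq 0$ and $\phi(x)\phi''(x)\leq 0$ force the two analytic conditions \emph{$V[\phi]$ non-decreasing} and \emph{$V[\phi']$ non-increasing}. I would therefore reduce the whole lemma to signing $V[\phi]'$ and $V[\phi']'$, which I obtain by differentiating the defining expectations in $x$.

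For the first monotonicity, write $V[\phi](x) = \sigma_w^2\, \mathbb{E}[\phi(\sqrt{x}Z)^2]$ and differentiate under the expectation (justified by dominated convergence using $\phi \in \mathcal{D}^2_g$ to dominate the relevant integrands on a neighbourhood of $x$). The chain rule gives
\begin{equation*}
V[\phi]'(x) = \frac{\sigma_w^2}{\sqrt{x}}\,\mathbb{E}\!\left[Z\,\phi(\sqrt{x}Z)\,\phi'(\sqrt{x}Z)\right] = \frac{\sigma_w^2}{x}\,\mathbb{E}\!\left[(\sqrt{x}Z)\,\phi(\sqrt{x}Z)\,\phi'(\sqrt{x}Z)\right].
\end{equation*}
Setting $w=\sqrt{x}Z$, the integrand is exactly $w\phi(w)\phi'(w)\geq 0$ by the first hypothesis, so it is non-negative for every realization of $Z$; hence $V[\phi]'(x)\geq 0$ and $V[\phi]$ is non-decreasing.

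For the second monotonicity I first turn the two sign hypotheses into the single pointwise inequality $x\phi'(x)\phi''(x)\leq 0$. The key step is a sign casework at each $x$ with $\phi(x)\neq 0$: if $\phi(x)>0$ then $x\phi'(x)\geq 0$ (from the first hypothesis) and $\phi''(x)\leq 0$ (from the second), while if $\phi(x)<0$ then $x\phi'(x)\leq 0$ and $\phi''(x)\geq 0$; in both cases the product $x\phi'(x)\phi''(x)$ is non-positive. On the zero set $\{\phi=0\}$ one has $\phi'=0$ almost everywhere, so the product vanishes a.e.\ there, and the inequality holds for almost every $x$. Differentiating $V[\phi'](x)=\sigma_w^2\,\mathbb{E}[\phi'(\sqrt{x}Z)^2]$ exactly as above yields
\begin{equation*}
V[\phi']'(x) = \frac{\sigma_w^2}{x}\,\mathbb{E}\!\left[(\sqrt{x}Z)\,\phi'(\sqrt{x}Z)\,\phi''(\sqrt{x}Z)\right],
\end{equation*}
whose integrand is $w\phi'(w)\phi''(w)\leq 0$ with $w=\sqrt{x}Z$, giving $V[\phi']'(x)\leq 0$ and hence $V[\phi']$ non-increasing.

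The main obstacle I anticipate is not the sign algebra but the analytic bookkeeping around it: justifying differentiation under the expectation at the scaled argument $\sqrt{x}Z$ (rather than merely at $x=1$, which is what $\mathcal{D}^2_g$ controls directly) and handling the degenerate locus $\{\phi = 0\}$, where the pointwise casework breaks down and one must fall back on the almost-everywhere vanishing of $\phi'$ there. Once $V[\phi]$ and $V[\phi']$ are shown monotone, the remaining ingredients of Proposition \ref{prop:eoc_approximation} follow by direct inspection: $V[\phi']>0$ because $\phi'(0)\neq 0$, so the ratio $V[\phi]/V[\phi']$ and the quantity $\sigma_{max}$ are well defined, while non-linearity and $\sigma_{max}>0$ are read off from $\phi$ itself, completing the verification of all the hypotheses.
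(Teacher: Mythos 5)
Your proof is correct and follows the same route the paper itself uses for the analogous Tanh-like lemma: differentiate $V[\phi]$ and $V[\phi']$ under the expectation so that monotonicity reduces to the pointwise signs of $w\,\phi(w)\phi'(w)$ and $w\,\phi'(w)\phi''(w)$ with $w=\sqrt{x}Z$. The supplementary material does not actually print a separate proof of this particular lemma, and your sign casework --- deriving $x\phi'(x)\phi''(x)\leq 0$ from the two hypotheses off the zero set of $\phi$, and noting that $\phi'=0$ a.e.\ on $\{\phi=0\}$ so the exceptional set is Lebesgue-null and harmless under the Gaussian expectation --- correctly supplies the one step not already contained in that template.
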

{\it Example}: Tanh and ELU satisfy all conditions of Lemma \ref{lemma:easy_conditions}. This may partly explain why ELU performs experimentally better than ReLU (see next section). Another example is an activation function of the form $\lambda x + \beta \text{Tanh}(x) $ where $\lambda, \beta \in \mathbb{R}$. We check the performance of these activations in the next section.

\section{Experiments}
In this section, we demonstrate empirically the theoretical results established above. We show that:
\begin{itemize}
    \item For deep networks, only an initialization on the EOC could make the training possible, and the initialization on the EOC performs better than Batch Normalization.
    \item Smooth activation functions in the sense of Proposition \ref{prop:rate_smooth_functions} perform better than ReLU-like activation, especially for very deep networks.
    \item Choosing the right point on the EOC further accelerates the training.
\end{itemize}

\begin{figure*}
    \centering
    \subfigure[Epoch 10]{%
    \label{fig:tanh_epoch10}%
    \includegraphics[width=2.125in]{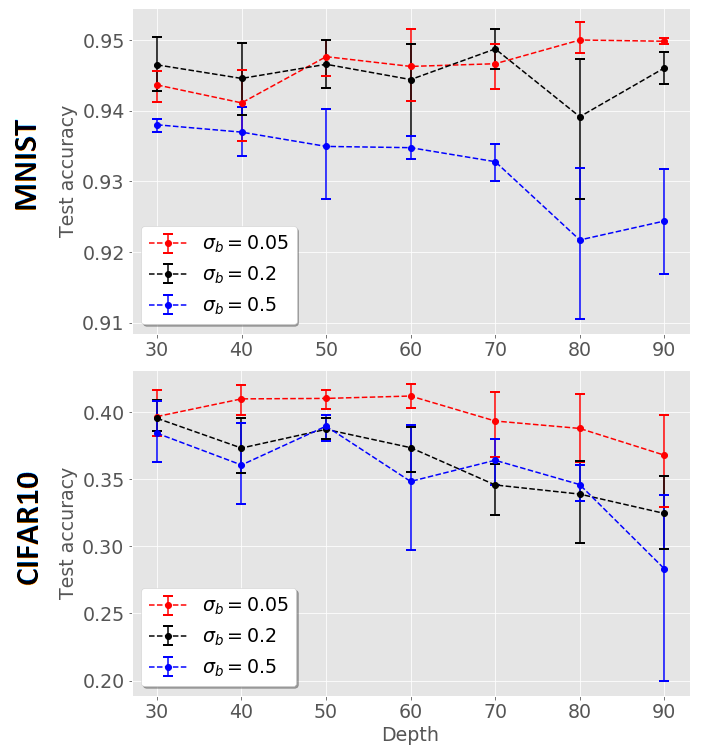}%
    }%
    \subfigure[Epoch 50]{%
    \label{fig:tanh_epoch50}%
    \includegraphics[width=2in]{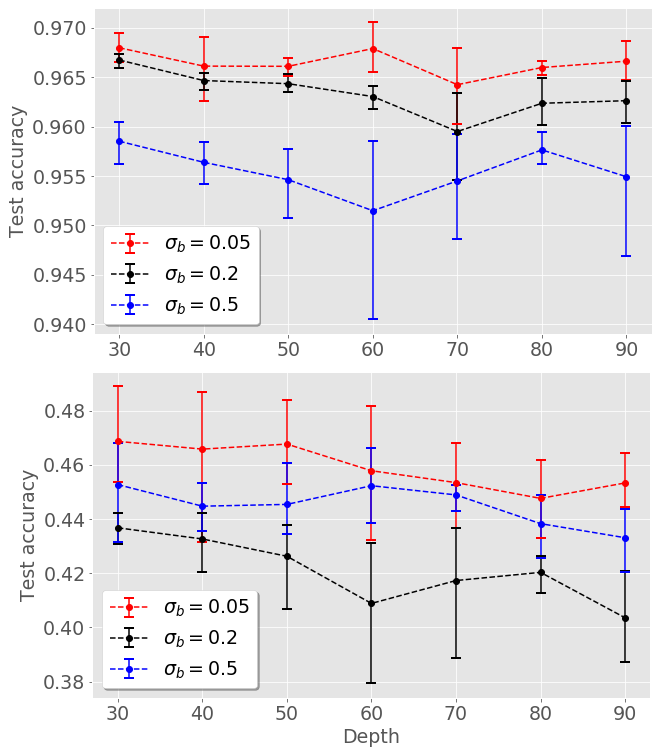}%
    }%
    \subfigure[Epoch 100]{%
    \label{fig:tanh_epoch100}%
    \includegraphics[width=2in]{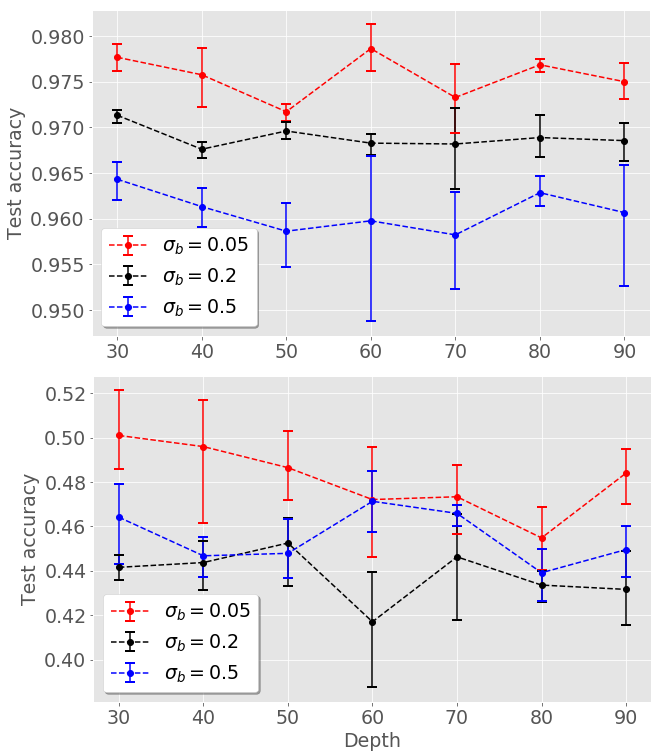}%
    }
    \caption{Test accuracies for Tanh network with depths between 30 and 90  and width 300 using different points on the EOC.}
    \label{fig:tanh_different_depths}
\end{figure*}

We demonstrate empirically our results on the MNIST and CIFAR10 datasets for depths $L$ between 10 and 200 and width $300$. We use SGD and RMSProp for training. We performed a grid search between $10^{-6}$ and $10^{-2}$ with exponential step of size 10 to find the optimal learning rate. For SGD, a learning rate of $\sim 10^{-3}$ is nearly optimal for $L \leq 150$, for $L>150$, the best learning rate is $\sim 10^{-4}$. For RMSProp,$10^{-5}$ is nearly optimal for networks with depth $L\leq200$ (for deeper networks, $10^{-6}$ gives better results). We use a batchsize of 64.\\

{\bf Initialization on the Edge of Chaos}.
We initialize randomly the network by sampling $W^l_{ij} \stackrel{iid}{\sim} \mathcal{N}(0, \sigma_w^2 /N_{l-1})$ and $B^l_i \stackrel{iid}{\sim}  \mathcal{N}(0,\sigma^2_b)$. Figure \ref{fig:training_300x200} shows that the initialization on the EOC dramatically accelerates the training for ELU, ReLU and Tanh. The initialization in the ordered phase (here we used $(\sigma_b, \sigma_w) = (1,1)$ for all activations) results in the optimization algorithm being stuck eventually at a very poor test accuracy of $\sim 0.1$ (equivalent to selecting the output uniformly at random). Figure \ref{fig:training_300x200} also  shows that EOC combined to BatchNorm results in a worse learning curve and dramatically increases the training time. Note that it is crucial here to initialize BatchNorm parameters to $\alpha=1$ and $\beta=0$ in order to keep our analysis on the forward propagation on the EOC valid for networks with BatchNorm.
\begin{table}[t]
\label{table:test_acc_100epochs}
\caption{Test accuracies for width 300 and depth 200 with different activation function on MNIST and CIFAR10 after 100 epochs}
\vskip 0.15in
\begin{center}
\begin{small}
\begin{sc}
\begin{tabular}{lcccr}
\toprule
\bf MNIST  & EOC & EOC + BN & Ord Phase \\
\midrule
ReLU    & {\bf 93.57$\pm$ 0.18}& 93.11$\pm$ 0.21 & 10.09$\pm$ 0.61  \\
ELU & {\bf 97.62$\pm$ 0.21}& 93.41$\pm$ 0.3& 10.14$\pm$ 0.51\\
Tanh    & {\bf 97.20$\pm$ 0.3}& 10.74$\pm$ 0.1& 10.02$\pm$ 0.13 \\
\bottomrule
\end{tabular}
\end{sc}
\end{small}
\end{center}

\begin{center}
\begin{small}
\begin{sc}
\begin{tabular}{lcccr}
\toprule
\bf CIFAR10 & EOC & EOC + BN & Ord Phase \\
\midrule
ReLU    & {\bf 36.55$\pm$ 1.15}& 35.91$\pm$ 1.52 & 9.91$\pm$ 0.93  \\
ELU & {\bf 45.76$\pm$ 0.91}& 44.12$\pm$ 0.93& 10.11$\pm$ 0.65\\
Tanh    & {\bf 44.11$\pm$ 1.02}& 10.15$\pm$ 0.85& 9.82$\pm$ 0.88 \\
\bottomrule
\end{tabular}
\end{sc}
\end{small}
\end{center}
\end{table}
Table \ref{table:test_acc_100epochs} presents test accuracy after 100 epochs for different activation functions and different training methods (EOC, EOC$+$BatchNorm, Ordered phase) on MNIST and CIFAR10. For all activation functions but Softplus, EOC initialization leads to the best performance. Adding BatchNorm to the EOC initialization makes the training worse, this can be explained the fact that parameters $\alpha$ and $\beta$ are also modified during the first backpropagation. This invalidates the EOC results for gradient backpropagation (see proof of Proposition \ref{prop:eoc_approximation}).

%\begin{table}[t]
%\label{table:test_acc_100epochs}
%\caption{Test accuracies for width 300 and depth 200 with different activation function on MNIST after 100 epochs}
%\vskip 0.15in
%\begin{center}
%\begin{small}
%\begin{sc}
%\begin{tabular}{lcccr}
%\toprule
%  & EOC & EOC + BN & Ord Phase \\
%\midrule
%ReLU    & {\bf 93.57$\pm$ 0.2}& 93.11$\pm$ 0.21 & 10.09$\pm$ 0.61  \\
%ELU & {\bf 97.62$\pm$ 0.2}& 93.41$\pm$ 0.3& 10.14$\pm$ 0.51\\
%Tanh    & {\bf 97.20$\pm$ 0.3}& 10.74$\pm$ 0.1& 10.02$\pm$ 0.13 \\
%\bottomrule
%\end{tabular}
%\end{sc}
%\end{small}
%\end{center}
%
%\begin{center}
%\begin{small}
%\begin{sc}
%\begin{tabular}{lcccr}
%\toprule
% & EOC & EOC + BN & Ord Phase \\
%\midrule
%ReLU    & {\bf 36.55$\pm$ 1.21}& 93.11$\pm$ 0.21 & 10.09$\pm$ 0.61  \\
%ELU & {\bf 45.76$\pm$ 0.91}& 93.41$\pm$ 0.3& 10.14$\pm$ 0.51\\
%Tanh    & {\bf 44.11$\pm$ 1.02}& 10.74$\pm$ 0.1& 10.02$\pm$ 0.13 \\
%\bottomrule
%\end{tabular}
%\end{sc}
%\end{small}
%\end{center}
%\vskip -0.1in
%\end{table}

{\bf Impact of the smoothness of the activation function on the training.} Table \ref{table:impact_smoothness} shows the test accuracy at different epochs for ReLU, ELU, Tanh. Smooth activation functions perform better than ReLU. More experimental results with RMSProp and other activation functions of the form $x + \alpha \text{Tanh}(x)$ are provided in the supplementary material.
\begin{table}[t]
\label{table:impact_smoothness}
\caption{Test accuracies for width 300 and depth 200 with different activation function on MNIST and CIFAR10 after 10, 50 and 100 epochs}
\vskip 0.15in
\begin{center}
\begin{small}
\begin{sc}
\begin{tabular}{lcccr}
\toprule
\bf MNIST  & Epoch 10 & Epoch 50 & Epoch 100 \\
\midrule
ReLU    &  66.76$\pm$ 1.95 & 88.62$\pm$ 0.61 & 93.57$\pm$ 0.18  \\
ELU &  {\bf 96.09$\pm$ 1.55} & {\bf 97.21$\pm$ 0.31}& {\bf 97.62$\pm$ 0.21}\\
Tanh    &  89.75$\pm$ 1.01 & 96.51$\pm$ 0.51 & 97.20$\pm$ 0.3 \\
\bottomrule
\end{tabular}
\end{sc}
\end{small}
\end{center}

\begin{center}
\begin{small}
\begin{sc}
\begin{tabular}{lcccr}
\toprule
\bf CIFAR10 & Epoch 10 & Epoch 50 & Epoch 100 \\
\midrule
ReLU    &  26.46$\pm$ 1.68& 33.74$\pm$ 1.21 & 36.55$\pm$ 1.15  \\
ELU & {\bf 35.95$\pm$ 1.83}& {\bf 45.55$\pm$ 0.91}& {\bf 47.76$\pm$ 0.91}\\
Tanh    &  34.12$\pm$ 1.23& 43.47$\pm$ 1.12& 44.11$\pm$ 1.02 \\
\bottomrule
\end{tabular}
\end{sc}
\end{small}
\end{center}
\end{table}

{\bf Selection of a point on the EOC.} We have showed that a sensible choice is to select $\sigma_b$ such that $L \sim \beta_q$ on the EOC. Figure \ref{fig:tanh_different_depths} shows test accuracy of a Tanh network for different depths using $\sigma_b \in \{ 0.05, 0.2, 0.5\}$. With $\sigma_b = 0.05$, we have $\beta_q \sim 50$. We see for depth 50, the red curve ($\sigma_b = 0.05$) is the best. For other depths $L$ between 30 and 90, $\sigma_b=0.05$ is the value that makes $\beta_q$ the closest to $L$ among $ \{ 0.05, 0.2, 0.5\}$, which explains why the red curve is approximately better for all depths between 30 and 90.\\
To further confirm this finding, we search numerically for the best $\sigma_b \in \{ 2k \times 10^{-2} : k \in [1,50]\}$ for depths $30, 100, 200$. Table \ref{table:optimal_point} shows the results.

\begin{table}[htb]
\label{table:optimal_point}
\caption{Best test accuracy achieved after 100 epochs with Tanh on MNIST}
\vskip 0.15in
\begin{center}
\begin{small}
\begin{sc}
\begin{tabular}{lcccr}
\toprule
 Depth & $L=30$ &  $L=50$ & $L=200$ \\
\midrule
$2k \times 10^{-2}$    &  0.080 & 0.040 & 0.020  \\
with Rule $\beta_q \approx L$ &  0.071 & 0.030& 0.022\\
\bottomrule
\end{tabular}
\end{sc}
\end{small}
\end{center}
\end{table}

\section{Discussion}
The Gaussian process approximation of Deep Neural Networks was used by \cite{samuel} to show that very deep Tanh networks are trainable only on the EOC. We give here a comprehensive analysis of the EOC for a large class of activation functions. We also prove that smoothness plays a major role in terms of signal propagation. Numerical results in Table \ref{table:impact_smoothness} confirm this finding. Moreover, we introduce a rule to choose the optimal point on the EOC, this point is a function of the depth. As the depth goes to infinity (e.g. $L=400$), we need smaller $\sigma_b$ to achieve the best signal propagation. However, the limiting variance $q$ also becomes close to zero as $\sigma_b$ goes to zero. To avoid this problem, one possible solution is to change the activation function to ensure that the coefficient $\beta_q$ becomes large independently of the choice of $\sigma_b$ on the EOC (see supplementary material).

Our results have implications for Bayesian neural networks which have received renewed attention lately; see, e.g., \cite{hernandez} and \cite{lee}. They indeed indicate that, if one assigns i.i.d. Gaussian prior distributions to the weights and biases, we need to select not only the prior parameters $(\sigma_b,\sigma_w)$ on the EOC but also an activation function satisfying Proposition \ref{prop:rate_smooth_functions} to obtain a non-degenerate prior on the induced function space.

% In the unusual situation where you want a paper to appear in the
% references without citing it in the main text, use \nocite
%\nocite{langley00}

\bibliography{bib_icml}
\bibliographystyle{icml2019}

\appendix

%%%%%%%%%%%%%%%%%%%%%%%%%%%%%%%%%%%%%%%%%%%%%%%%%%%%%%%%%%%%%%%%%%%%%%%%%%%%%%%
%%%%%%%%%%%%%%%%%%%%%%%%%%%%%%%%%%%%%%%%%%%%%%%%%%%%%%%%%%%%%%%%%%%%%%%%%%%%%%%
% DELETE THIS PART. DO NOT PLACE CONTENT AFTER THE REFERENCES!
%%%%%%%%%%%%%%%%%%%%%%%%%%%%%%%%%%%%%%%%%%%%%%%%%%%%%%%%%%%%%%%%%%%%%%%%%%%%%%%
%%%%%%%%%%%%%%%%%%%%%%%%%%%%%%%%%%%%%%%%%%%%%%%%%%%%%%%%%%%%%%%%%%%%%%%%%%%%%%%
\onecolumn

\section{Proofs}

We provide in this supplementary material the proofs of theoretical results presented in the main document, and we give additive theoretical and experimental results. For the sake of clarity we recall the results before giving their proofs.

\subsection{Convergence to the fixed point: Proposition 1}

\begin{lemma2}
Let $M_{\phi} := \mathrm{sup}_{x\geq 0} \mathbb{E}[|\phi'^2(x Z) + \phi''(x Z) \phi(x Z)|] $. Suppose $M_{\phi} < \infty$, then for $\sigma_w^2 < \frac{1}{M_{\phi}}$ and any $\sigma_b$, we have $(\sigma_b, \sigma_w) \in D_{\phi, var}$ and $K_{\phi, var}(\sigma_b, \sigma_w) = \infty$

Moreover, let  $C_{\phi, \delta} := \mathrm{sup}_{x,y \geq0, |x-y|\leq \delta, c \in [0,1]} \mathbb{E}[|\phi'(x Z_1)\phi'(y (cZ_1 + \sqrt{1-c^2}Z_2)|]$. Suppose $C_{\phi, \delta}<\infty$ for some positive $\delta$, then for $\sigma^2_w < \min(\frac{1}{M_{\phi}}, \frac{1}{C_{\phi}})$ and any $\sigma_b$, we have $(\sigma_b, \sigma_w) \in D_{\phi, var} \cap D_{\phi, corr}$ and $K_{\phi, var}(\sigma_b, \sigma_w)=K_{\phi, corr}(\sigma_b, \sigma_w)=\infty$.
\end{lemma2}
\medskip

\begin{proof}
To abbreviate the notation, we use $q^l := q^l_{a}$ for some fixed input $a$.\\

\noindent \textbf{Convergence of the variances:}  We first consider the asymptotic behaviour of $q^l = q_a^l$. Recall that $q^l = F(q^{l-1})$
where
\begin{equation*}
    F(x) = \sigma^2_b + \sigma^2_w \mathbb{E}[\phi(\sqrt{x} Z)^2].
\end{equation*}
The first derivative of this function is given by
\begin{align} \label{Fderiv}
F'(x) &= \sigma^2_w \mathbb{E}[\frac{Z}{\sqrt{x}}\phi'(\sqrt{x} Z)\phi(\sqrt{x} Z)]= \sigma^2_w \mathbb{E}[\phi'(\sqrt{x} Z)^2 + \phi''(\sqrt{x} Z)\phi(\sqrt{x} Z)],
\end{align}
where we use Gaussian integration by parts, $\mathbb{E}[ZG(Z)] = \mathbb{E}[G'(Z)]$, an identity satisfied by any function $G$ such that $\mathbb{E}[|G'(Z)|]<\infty$.

Using the condition on $\phi$, we see that the function $F$ is a contraction mapping for $\sigma^2_w < \frac{1}{M_{\phi}}$ and  the Banach fixed-point theorem guarantees the existence of a unique fixed point $q$ of $F$, with $\lim_{l\rightarrow +\infty } q^l=q$.  Note that this fixed point depends only on $F$, therefore this is true for any input $a$ and $K_{\phi, var}(\sigma_b, \sigma_w) = \infty$.\\

\noindent \textbf{Convergence of the covariances:}
Since $M_{\phi}<\infty$, then for all $a,b \in \mathbb R^d$ there exists $l_0$ such that $|\sqrt{q^l_{a}} - \sqrt{q^l_{b}}|<\delta$ for all $l>l_0$. Let $l>l_0$, using Gaussian integration by parts, we have
\begin{equation*}
    \frac{d c^{l+1}_{ab}}{dc^l_{ab}} = \sigma^2_w \mathbb{E}[|\phi'(\sqrt{q^l_{a}} Z_1)\phi'(\sqrt{q^l_{b}} (c^l_{ab} Z_1 + \sqrt{1-(c^l_{ab})^2}Z_2)|] .
\end{equation*}
We cannot use the Banach fixed point theorem directly because the integrated function here depends on $l$ through $q^l$. For ease of notation, we write $c^l := c^l_{ab}$. We have
\begin{align*}
|c^{l+1} - c^l| &= |\int_{c^{l-1}}^{c^l} \frac{d c^{l+1}}{dc^l}(x) dx| \leq \sigma^2_w C_{\phi} |c^{l} - c^{l-1}|.
\end{align*}
Therefore, for $\sigma^2_w < \min(\frac{1}{M_{\phi}}, \frac{1}{C_{\phi}})$, $c^l$ is a Cauchy sequence and it converges to a limit $c\in [0,1]$. At the limit
\begin{equation*}
c= f(c) = \frac{\sigma^2_b + \sigma^2_w \mathbb{E}[\phi(\sqrt{q} z_1)\phi(\sqrt{q}(c z_1 + \sqrt{1 - c^2} z_2)))]}{q}.
\end{equation*}
The derivative of this function is given by
\begin{equation*}
f'(x) =  \sigma^2_w \mathbb{E}[\phi'(\sqrt{q} Z_1)\phi'(\sqrt{q} (x Z_1 + \sqrt{1-x}Z_2)].
\end{equation*}
By assumption on $\phi$ and the choice of $\sigma_w$, we have $\textrm{sup}_{x}|f'(x)|<1$ so $f$ is a contraction and has a unique fixed point. Since $f(1) = 1$ then $c=1$.
The above result is true for any $a,b$, therefore $K_{\phi, var}(\sigma_b, \sigma_w)=K_{\phi, corr}(\sigma_b, \sigma_w)=\infty$.
\end{proof}

\medskip
\medskip

\begin{lemma2}
Let $(\sigma_b, \sigma_w) \in D_{\phi, var} \cap D_{\phi, corr}$ such that $q>0$, $a,b \in \mathbb R^d$ and $\phi$ an activation function such that $\sup_{x \in K} \mathbb{E}[\phi(x Z)^2]<\infty$ for all compact sets $K$. Define $f_l$ by $c_{a,b}^{l+1} = f_l(c_{a,b}^l)$ and $f$ by $f(x) = \frac{\sigma^2_b + \sigma^2_w \mathbb{E}[\phi(\sqrt{q}Z_1)\phi(\sqrt{q}(x Z_1 + \sqrt{1-x^2}Z_2))}{q}$. Then $\lim_{l \rightarrow \infty} \sup_{x \in [0, 1]} |f_l(x) - f(x)| = 0$.
\end{lemma2}

\begin{proof}
For $x \in [0,1]$, we have
\begin{align*}
f_l(x) - f(x) &= (\frac{1}{\sqrt{q^l_a q^l_b}} - \frac{1}{q}) (\sigma_b^2 + \sigma_w^2 \mathbb{E}[\phi(\sqrt{q^l_a}Z_1)\phi(\sqrt{q^l_b} u_2(x))]) \\
&+ \frac{\sigma_w^2}{q} (\mathbb{E}[\phi(\sqrt{q^l_a}Z_1)\phi(\sqrt{q^l_b} u_2(x))] - \mathbb{E}[\phi(\sqrt{q}Z_1)\phi(\sqrt{q} u_2(x))]),
\end{align*}
where $u_2(x) := x Z_1 + \sqrt{1-x^2} Z_2$. The first term goes to zero uniformly in $x$ using the condition on $\phi$ and Cauchy-Schwartz inequality. As for the second term, it can be written again as
\begin{equation*}
\mathbb{E}[(\phi(\sqrt{q^l_a}Z_1)-\phi(\sqrt{q}Z_1))\phi(\sqrt{q^l_b} u_2(x))] + \mathbb{E}[\phi(\sqrt{q}Z_1)(\phi(\sqrt{q^l_b} u_2(x)) - \phi(\sqrt{q} u_2(x)))].
\end{equation*}
Using Cauchy-Schwartz and the condition on $\phi$, both terms can be controlled uniformly in $x$ by an integrable upper bound. We conclude using dominated convergence.
\end{proof}

\medskip
\medskip

\begin{lemma2}[Weak EOC]
Let $\phi$ be a ReLU-like function with $\lambda, \beta$ defined as above. Then $f'_l$ does not depend on $l$, and having $f_l'(1) = 1$ and $q^l$ bounded is only achieved for the singleton $(\sigma_b, \sigma_w) = (0, \sqrt{\frac{2}{\lambda^2 + \beta^2}})$. The Weak EOC is defined as this singleton.
\end{lemma2}

\begin{proof}
We write $q^l = q_a^l$ throughout the proof.
Note first that  the variance satisfies the recursion:
\begin{align}\label{recursion}
q^{l+1} &= \sigma_b^2 + \sigma_w^2  \mathbb{E}[\phi(Z)^2] q^l = \sigma_b^2 + \sigma_w^2 \frac{\lambda^2 + \beta^2}{2} q^l.
\end{align}
For all $\sigma_w < \sqrt{\frac{2}{\lambda^2 + \beta^2}}$, $q = \sigma^2_b\left( 1 - \sigma_w^2 (\lambda^2 + \beta^2)/2 \right)^{-1}$ is a fixed point.
This is true for any input, therefore $K_{\phi, var}(\sigma_b, \sigma_w) = \infty$ and (i) is proved.

Now, the EOC equation is given by $\chi_1 = \sigma_w^2 \mathbb{E}[\phi'(Z)^2] = \sigma_w^2 \frac{\lambda^2 + \beta^2}{2}$. Therefore, $\sigma^2_w = \frac{2}{\lambda^2 + \beta^2}$. Replacing $\sigma_w^2$ by its critical value in \eqref{recursion} yields
\begin{align*}
q^{l+1} &= \sigma_b^2 +  q^l.
\end{align*}
Thus $q= \sigma_b^2 + q$ if and only if $\sigma_b= 0$, otherwise $q^l$ diverges to infinity. So the frontier is reduced to a single point  $(\sigma^2_b, \sigma^2_w) = ( 0, \mathbb{E}[\phi'(Z)^2]^{-1})$, and the variance does not depend on $l$.

\end{proof}

\begin{prop2}[EOC acts as Residual connections] \label{prop:relukernel}
Consider a ReLU network with parameters $(\sigma_b^2, \sigma_w^2) = (0,2) \in EOC$ and let $c^l_{ab}$ be the corresponding correlation. Consider also a ReLU network with simple residual connections given by
$$ \overline{y}^l_i(a) = \overline{y}^{l-1}_i(a) +  \sum_{j=1}^{N_{l-1}} \overline{W}^l_{ij} \phi(\overline{y}^{l-1}_j(a)) + \overline{B}^l_i, $$
where $\overline{W}^l_{ij} \sim \mathcal{N}(0, \frac{\overline{\sigma}_w^2}{N_{l-1}})$ and $\overline{B}^l_i \sim \mathcal{N}(0, \overline{\sigma}_b^2)$. Let $\overline{c}^l_{ab}$ be the corresponding correlation. Then, by taking $\overline{\sigma}_w >0$ and $\overline{\sigma}_b =0$, there exists a constant $\gamma>0$ such that
\begin{equation*}
    1 - c^l_{ab} \sim \gamma ( 1 - \overline{c}^l_{ab} )\sim \frac{9 \pi^2}{2 l^2}
\end{equation*}
as $l \rightarrow \infty$.
\end{prop2}

\begin{proof}

Let us first give a closed-form formula of the correlation function $f$ of a ReLU network. In this case, we have $f(x) = 2 \mathbb{E}[ (Z_1)_+ (x Z_1 + \sqrt{1-x^2}Z_2)_+] $ where $(x)_+ := x 1_{x>0}$. Let $x \in [0,1]$, $f$ is differentiable and satisfies
\begin{equation*}
    f'(x) = 2 \mathbb{E}[ 1_{Z_1>0} 1_{x Z_1 + \sqrt{1-x^2}Z_2>0}],
\end{equation*}
which is also differentiable. Simple algebra leads to
$$f^{"}(x) =  \frac{1}{\pi \sqrt{1 - x^2}}.$$
Since $\arcsin'(x) = \frac{1}{\sqrt{1-x^2}}$ 	and $f'(0) = 1/2$,
\begin{equation*}
    f'(x) = \frac{1}{\pi} \arcsin(x) + \frac{1}{2}.
\end{equation*}
Using the fact that $\int \arcsin = x \arcsin + \sqrt{1 - x^2}$ and $f(1)=1$, we conclude that for $x \in [0,1]$, $f(x) = \frac{1}{\pi} x \arcsin(x) + \frac{1}{\pi} \sqrt{1 - x^2} + \frac{1}{2} x $.\\
For the residual network, we have $\overline{q}^l_a = \overline{q}^{l-1}_a + \overline{\sigma}_w^2 \mathbb{E}[\phi(\sqrt{\overline{q}^{l-1}_a} Z)^2] = (1 + \frac{\overline{\sigma}_w^2}{2}) \overline{q}^{l-1}_a$. \\
Let $\delta = \frac{1}{1 + \frac{\overline{\sigma}_w^2}{2}}$. We have

\begin{align*}
    \overline{c}^l_{ab} &= \delta  \overline{c}^{l-1}_{ab} + \delta  \overline{\sigma}_w^2 \mathbb{E}[\phi(Z_1)\phi(U_2(\overline{c}^{l-1}_{ab}))]\\
    &=  \overline{c}^{l-1}_{ab} + \delta  \frac{\overline{\sigma}_w^2}{2} (f(\overline{c}^{l-1}_{ab}) - \overline{c}^{l-1}_{ab})\\
\end{align*}
Now, we use Taylor expansion near to conclude. However, since $f$ is not differentiable in 1 for all orders, we use a change of variable $x= 1 - t^2$ with $t $ close to 0, then
\begin{equation*}
    \arcsin(1-t^2) = \frac{\pi}{2} -\sqrt{2} t - \frac{\sqrt{2}}{12} t^3 + O(t^5),
\end{equation*}
so that
\begin{equation*}
    \arcsin(x) = \frac{\pi}{2} -\sqrt{2} (1-x)^{1/2} - \frac{\sqrt{2}}{12} (1-x)^{3/2} + O((1-x)^{5/2}),
\end{equation*}
and
\begin{equation*}
    x \arcsin(x) = \frac{\pi}{2}x -\sqrt{2} (1-x)^{1/2} + \frac{11 \sqrt{2}}{12} (1-x)^{3/2} + O((1-x)^{5/2}).
\end{equation*}
Since
\begin{equation*}
    \sqrt{1-x^2} = \sqrt{2}(1-x)^{1/2} - \frac{\sqrt{2}}{4} (1-x)^{3/2} + O((1-x)^{5/2}),
\end{equation*}
we obtain that
\begin{equation}\label{Taylorf}
f(x) \underset{x \rightarrow 1-}{=} x + \frac{2 \sqrt{2}}{3 \pi} (1-x)^{3/2} + O((1-x)^{5/2}).
\end{equation}
Since $(f(x) - x)'= \frac{1}{\pi}(\arcsin(x) - \frac{\pi}{2}) < 0$ and $f(1)=1$, for all  $x \in [0,1)$, $f(x) > x$. If $c^l < c^{l+1}$ then by taking the image by $f$ (which is increasing because $f'\geq0$) we have that $c^{l+1} < c^{l+2}$, and we know that $c^1 = f(c^0) \geq c^0$, so by induction the sequence $c^l$ is increasing, and therefore it converges to the fixed point of $f$ which is 1.\\

Using a Taylor expansion of $f$ near 1, we have 
\begin{equation*}
    \overline{c}^l_{ab} = \overline{c}^{l-1}_{ab} + \delta \frac{2\sqrt{2}}{3 \pi} (1 - \overline{c}^{l-1}_{ab})^{3/2} + O((1 - \overline{c}^{l-1}_{ab})^{5/2})
\end{equation*}
and
\begin{equation*}
    c^l_{ab} = c^{l-1}_{ab} +  \frac{2\sqrt{2}}{3 \pi} (1 - c^{l-1}_{ab})^{3/2} + O((1 - c^{l-1}_{ab})^{5/2}).
\end{equation*}
Now let $\gamma_l := 1 - c^l_{ab} $ for $a, b$ fixed. We note $s=\frac{2 \sqrt{2}}{3 \pi}$, from the series expansion we have that
$    \gamma_{l+1} = \gamma_{l} - s \gamma^{3/2}_{l} + O(\gamma^{5/2}_{l} )$ so that
\begin{align*}
    \gamma^{-1/2}_{l+1} &= \gamma^{-1/2}_{l} (1 - s \gamma^{1/2}_{l} + O(\gamma^{3/2}_{l} ) )^{-1/2}  = \gamma^{-1/2}_{l} (1 + \frac{s}{2} \gamma^{1/2}_{l} + O(\gamma^{3/2}_{l} ))\\
                        &= \gamma^{-1/2}_{l} + \frac{s}{2} + O(\gamma_{l}).
\end{align*}
Thus, as $l$ goes to infinity
\begin{equation*}
    \gamma^{-1/2}_{l+1} - \gamma^{-1/2}_l \sim \frac{s}{2}
\end{equation*}
and by summing and equivalence of positive divergent series
\begin{equation*}
    \gamma^{-1/2}_{l} \sim \frac{s}{2} l.
\end{equation*}
Therefore, we have $1 - c^l_{ab} \sim \frac{9\pi^2}{2 l^2}$. Using the same argument for $\overline{c}^l_{al}$, we conclude.

\end{proof}

\begin{prop2}\label{prop:existence_eoc}
Let $\phi \in \mathcal{D}^1_g$ be non ReLU-like function. Assume $V[\phi]$ is non-decreasing and $V[\phi']$ is non-increasing. Let $\sigma_{max}:=\sqrt{\sup_{x\geq 0}|x - \frac{V[\phi](x)}{V[\phi'](x)}|}$ and for $\sigma_b < \sigma_{max}$ let $q_{\sigma_b}$ be the smallest fixed point of the function $\sigma_b^2 + \frac{V[\phi]}{V[\phi']}$. Then we have $EOC = \{ (\sigma_b,\frac{1}{\sqrt{\mathbb{E}[\phi'(\sqrt{q}Z)^2]}} ): \sigma_b < \sigma_{max}\}$.
\end{prop2}

To prove Proposition \ref{prop:existence_eoc}, we need to introduce some lemmas. The next lemma gives a characterization of ReLU-like activation functions.

\begin{customlemma}{1.1}[A Characterization of ReLU-like activations]\label{prop:characterization}
Let $\phi \in \mathcal{D}^1(\mathbb{R}, \mathbb{R})$ such that $\phi(0)=0$ and $\phi'$ non-identically zero. We define the function $e$ for non-negative real numbers by 
\begin{equation*}
    e(x) =\frac{V[\phi](x)}{V[\phi'](x)} = \frac{\mathbb{E}[\phi(\sqrt{x}Z)^2]}{\mathbb{E}[\phi'(\sqrt{x}Z)^2]}
\end{equation*}
Then, for all $x\geq0$, $e(x)\leq x$.\\
Moreover, the following statements are equivalent
\begin{itemize}
    \item There exists $x_0>0$ such that $e(x_0) = x_0$.
    \item $\phi$ is ReLU-like, i.e. there exists $\lambda, \beta \in \mathbb{R}$ such that $\phi(x) =
\lambda x $ if $x>0$ and
$\phi(x) =\beta x $ if $x \leq 0$.
\end{itemize}
\end{customlemma}

\begin{proof}
Let $x>0$. We have for all $z \in \mathbb{R}, \phi(\sqrt{x}z) = \sqrt{x} \int_0^z \phi'(\sqrt{x}u) du$. This yields
\begin{align*}
    \mathbb{E}[\phi(\sqrt{x}Z)^2] &= x \mathbb{E}[\big(\int_0^Z \phi'(\sqrt{x}u) du\big)^2]\\
    &\leq x \mathbb{E}[ |Z| \int_0^{|Z|} \phi'(\sqrt{x}u)^2 du]\\
    &= x \mathbb{E}[ Z \int_0^Z \phi'(\sqrt{x}u)^2 du]\\
    &= x \mathbb{E}[ \phi'(\sqrt{x}Z)^2 du] 
\end{align*}
where we have used Cauchy-Schwartz inequality and Gaussian integration by parts. Therefore $e(x)\leq x$.\\
Now assume there exists $x_0>0$ such that $e(x_0) = x_0$. We have
\begin{align*}
    \mathbb{E}[\phi(\sqrt{x_0}Z)^2] &= x_0 \mathbb{E}[\big(\int_0^Z \phi'(\sqrt{x_0}u) du\big)^2]\\
    &= x_0 \mathbb{E}[1_{Z>0}\big(\int_0^Z \phi'(\sqrt{x_0}u) du\big)^2] + x_0 \mathbb{E}[1_{Z\leq0}\big(\int_Z^0 \phi'(\sqrt{x_0}u) du\big)^2] \\
    &\leq x_0 \mathbb{E}[ 1_{Z>0}  \int_0^{Z} 1 du \int_0^{Z} \phi'(\sqrt{x_0}u)^2 du] + x_0 \mathbb{E}[ 1_{Z\leq 0} \int_Z^{0} 1 du \int_Z^{0} \phi'(\sqrt{x_0}u)^2 du].\\
\end{align*}
The equality in Cauchy-Schwartz inequality implies that \\
- For almost every $z>0$, there exists $\lambda_z$ such that  $\phi'(\sqrt{x_0} u) = \lambda_z $ for all $u \in [0,z]$.\\
- For almost every $z<0$, there exists $\beta_z$ such that  $\phi'(\sqrt{x_0} u) = \beta_z $ for all $u \in [z,0]$.\\
Therefore, $\lambda_z, \beta_z$ are independent of $z$, and $\phi$ is ReLU-like.\\

It is easy to see that for ReLU-like activations, $e(x)=x$ for all $x\geq0$. 
\end{proof}

The next trivial lemma provides a sufficient condition for the existence of a fixed point of a shifted function.

\begin{customlemma}{1.2}
Let $g \in \mathcal{C}^0(\mathbb{R}^+, \mathbb{R})$ such that $g(0)=0$ and $g(x)\leq x$ for all $x \in \mathbb{R}^+$. Let $t_{max} := \sup_{x\geq 0}|x - g(x)|$ ($t_{max}$ may be infinite).
Then, for all $t \in [0,t_{max})$, the shifted function $t + g(.)$ has a fixed point.
\end{customlemma}

\begin{proof}
Let $t \in [0,t_{max})$. There exists $x_0 >0$ such that $t + g(.) < x_0 - g(x_0) + g(.)$. So we have $t + g(0)= t$ and $t + g(x_0) < x_0$, which means that $t + g(.)$ crosses the identity line, therefore the fixed point exists. 
\end{proof}

\begin{customcor}{1.1}\label{coro:general_eoc}
Let $\phi \in \mathcal{D}^1(\mathbb{R}, \mathbb{R})$ such that $\phi$ is non ReLU-like. Let $t_{max} = \sup_{x \geq 0} |x - \frac{V[\phi](x)}{V[\phi'](x)}|$. Then, For any $\sigma_b^2 \in [0,t_{max}) $, the shifted function $\sigma_b^2 + \frac{V[\phi]}{V[\phi']}$ has a fixed point q. Moreover, by taking $q$ to be the greatest fixed point, we have $\lim_{\sigma_b \rightarrow 0} q = 0$.
\end{customcor}

The limit of $q$ is zero because it is a fixed point of the function $\frac{V[\phi](x)}{V[\phi'](x)}$ which has only 0 as a fixed point for non ReLU-like functions. \\

Corollary \ref{coro:general_eoc} proves the existence of a fixed point for the shifted function $\sigma_b^2 + \frac{V[\phi]}{V[\phi']}$, which is a necessary condition for $(\sigma_b, 1/\sqrt{V[\phi'](q)})$ to be in the EOC where $q$ is the smallest fixed point. It is not a sufficient condition because $q$ may not be the smallest fixed point of $\sigma_b^2 + \frac{1}{V[\phi'](q)} V[\phi]$. We further analyse this problem hereafter.

\begin{definition}[Permissible couples]
Let $g ,h \in \mathcal{C}(\mathbb{R}^+, \mathbb{R}^+)$ and $c>0$. Define the function $k(x) = c +\frac{g(x)}{h(x)}$ for $x\geq0$ and let $q = \inf\{x : k(x) =x\}$. 
We say that $(g,h)$ is permissible if for any $c\geq0$ such that $q<\infty$, $q$ is the smallest fixed point of the function $c + \frac{g(.)}{h(q)}$.
\end{definition}

\begin{customlemma}{1.3}
Let $g ,h \in \mathcal{C}(\mathbb{R}^+, \mathbb{R}^+)$. Then the following statements are equivalent
\begin{enumerate}
    \item $(g,h)$ is permissible.
    \item For any $c>0$ such that $q$ is finite, we have $g(q) - g(x) < (q - x) h(q)$ for $x \in [0, q)$.
\end{enumerate}
\end{customlemma}

\begin{proof}
If $q$ is a fixed point of $c + \frac{g(.)}{h(.)}$, then $q$ is clearly a fixed point of $I(x) = c + \frac{1}{h(q)} g(x)$. Having $q$ is the smallest fixed point of $c + \frac{g(.)}{h(q)}$ is equivalent to $c + \frac{g(x)}{h(q)} > x$ for all $x \in [0,q)$. Since $c = q - \frac{g(q)}{h(q)}$, we conclude. 
\end{proof}

\begin{customcor}{1.2}
Let $g ,h \in \mathcal{C}(\mathbb{R}^+, \mathbb{R}^+)$. Assume $h$ is non-increasing, then $(g,h)$ is permissible.
\end{customcor}

\begin{proof}
Since $h$ is non-increasing, we have for $x \in [0, q)$, $g(q) - g(x) \leq h(q) 
(q - c)  - \frac{h(q)}{h(x)} g(x) = h(q) (q - (c+\frac{g(x)}{h(x)}))$. We conclude using the fact that $c+\frac{g(x)}{h(x)}>x$ for $x \in [0,q)$.
\end{proof}

\begin{customcor}{1.3}\label{cor:existence_eoc}
Let $\phi$ be a non ReLU-like function. Assume $V[\phi]$ is non-decreasing and $(V[\phi], V[\phi'])$ is permissible. Then, for any $\sigma_b^2 < t_{max} := \sup_{x\geq 0}|x - e(x)|$, by taking $\sigma_w^2 = \frac{1}{\mathbb{E}[\phi'(\sqrt{q}Z)^2]}$, we have $(\sigma_b, \sigma_w) \in EOC$. Moreover, we have $\lim_{\sigma_b \rightarrow 0} q = 0$.
\end{customcor}

We can omit the condition '$V[\phi]$ is non-decreasing' by choosing a small $t_{max}$. Indeed, by taking a small $\sigma_b$, the limiting variance $q$ is small, and we know that $V[\phi]$ is increasing near 0 because $V[\phi]'(0) = \phi'(0)^2>0$.\\

The proof of Proposition \ref{prop:existence_eoc} is straightforward from corollary A.3.\\

\begin{lemma2}
Let $\phi$ be a Tanh-like activation function, then $\phi$ satisfies all conditions of Proposition \ref{prop:existence_eoc} and $EOC = \{ (\sigma_b,\frac{1}{\sqrt{\mathbb{E}[\phi'(\sqrt{q}Z)^2]}} ): \sigma_b \in \mathbb{R}^+ \}$.
\end{lemma2}
\begin{proof}
For $x\geq0$, we have $V[\phi]'(x) = \frac{1}{x}\mathbb{E}[\sqrt{x}Z\phi'(\sqrt{x}Z)\phi(\sqrt{x}Z)] \geq0$, so $V[\phi]$ is non-decreasing. Moreover, $V[\phi']'(x) = \frac{1}{x}\mathbb{E}[\sqrt{x}Z\phi''(\sqrt{x}Z)\phi'(\sqrt{x}Z)] \leq0$, therefore $V[\phi']$ is non-increasing. To conclude, we still have to show that $t_{max} = \infty$.

Using the second condition on $\phi$, there exists $M>0$ such that $|\phi'(y)|^2 \geq M e^{-2\alpha |y|}$. Let $x>0$. we have 
\begin{align*}
    \mathbb{E}[\phi'(\sqrt{x}Z)^2] &\geq M \mathbb{E}[e^{-2\alpha |\sqrt{x}Z|}]\\
    &= 2 M \int_{0}^{\infty} e^{-2\alpha \sqrt{x}Z} \frac{e^{-z^2/2}}{\sqrt{2 \pi}} dz\\
    &= 2 M e^{2 \alpha^2 x} \Psi(2 \alpha \sqrt{x})\\
    &\sim \frac{2M}{2 \alpha \sqrt{x}}
\end{align*}
where $\Psi$ is the Gaussian cumulative function and where we used the asymptotic approximation $\Psi(x) \sim \frac{e^{-x^2/2}}{x}$ for large $x$.\\
Using this lower bound and the upper bound on $\phi$, there exists $x_0, k>0$ such that for $x>x_0$, we have $x - \frac{V[\phi](x)}{V[\phi'](x)} \geq x - k \sqrt{x} \rightarrow \infty$ which concludes the proof.
\end{proof}

\begin{prop2}[Convergence rate for smooth activations]
\label{prop:rate_smooth_functions}
Let $\phi \in \mathcal{A}$ such that $\phi$ non-linear (i.e. $\phi^{(2)}$ is non-identically zero). Then, on the EOC, we have $1 - c^l \sim \frac{\beta_q}{l}$ where $\beta_q = \frac{2 \mathbb{E}[\phi'(\sqrt{q}Z)^2]}{q \mathbb{E}[\phi''(\sqrt{q} Z)^2]}$.
\end{prop2}

\begin{proof}
We first prove that $\lim_{l \rightarrow \infty} c^l = 1$ on the EOC. Let $x \in [0,1)$ and $u_2(x):=x Z_1 + \sqrt{1-x^2}Z_2$, we have
\begin{align*}
    f'(x) &= \sigma^2_w \mathbb{E}[\phi'(\sqrt{q}Z_1)\phi'(\sqrt{q}u_2(x))]\\
    &\leq  \sigma^2_w  (\mathbb{E}[\phi'(\sqrt{q}Z_1)^2])^{1/2} (\mathbb{E}[\phi'(\sqrt{q}u_2(x))^2])^{1/2}\\
    &= 1
\end{align*}
where we have used Cauchy Schwartz inequality and the fact the $\sigma_w^2 = \frac{1}{\mathbb{E}[\phi'(\sqrt{q}Z)^2]}$. Moreover, the equality holds if and only if there exists a constant $s$ such that $\phi'(\sqrt{q}(x z_1 + \sqrt{1-x^2}z_2)) = s \phi'(\sqrt{q}z_1)$ for almost any $z_1, z_2 \in \mathbb{R}$, which is equivalent to having $\phi'$ equal to a constant almost everywhere on $\mathbb{R}$, hence $\phi$ is linear and $q$ does not exists. This proves that for all $x \in [0,1)$, $f'(x)<1$. Integrating both sides between $x$ and 1 yields $f(x)>x$ for all $x \in [0,1)$. Therefore $c^l$ is non-decreasing and converges to the fixed point of $f$ which is 1. \\

Now we want to prove that $f$ admits a Taylor expansion near 1. It is easy to do that if $\phi \in \mathcal{D}^3_{g}$. Indeed, using the conditions on $\phi$, we can easily see that $f$ has a third derivative at 1 and we have 
\begin{align*}
f'(1) &= \sigma_w^2 \mathbb{E}[\phi'(\sqrt{q}Z)^2]\\
f''(1) &= \sigma_w^2 q \mathbb{E}[\phi''(\sqrt{q}Z)^2].
\end{align*}
A Taylor expansion near 1 yields 
\begin{align*}
f(x) &= 1 + f'(1)(x-1) + \frac{(x-1)^2}{2} f''(1) + O((x-1)^3)\\
&= x + \frac{(x-1)^2}{\beta_q} + O((x-1)^3).
\end{align*}
The proof is a bit more complicated for general $\phi \in \mathcal{A}$. We prove the result when $\phi^{(2)}(x) = 1_{x<0} g_1(x) + 1_{x\geq 0} g_2(x)$. The generalization to the whole class is straightforward. Let us first show that there exists $g \in \mathcal{C}^1$ such that $f^{(3)}(x) = \frac{1}{\sqrt{1 - x^2}} g(x)$. \\
We have
\begin{align*}
    f''(x) &= \sigma_w^2 q \mathbb{E}[\phi''(\sqrt{q}Z_1) \phi''(\sqrt{q}U_2(x))]\\
    &= \sigma_w^2 q \mathbb{E}[\phi''(\sqrt{q}Z_1) 1_{U_2(x) <0}g_1(\sqrt{q}U_2(x))] + \sigma_w^2 q \mathbb{E}[\phi''(\sqrt{q}Z_1) 1_{U_2(x) >0}g_2(\sqrt{q}U_2(x))].
\end{align*}
Let $G(x) = \mathbb{E}[\phi''(\sqrt{q}Z_1) 1_{U_2(x) <0}g_1(\sqrt{q}U_2(x))]$ then
\begin{align*}
    G'(x) &=  \mathbb{E}[\phi''(\sqrt{q}Z_1) (Z_1 - \frac{x}{\sqrt{1 - x^2}}Z_2)\delta_{U_2(x) =0} \frac{1}{\sqrt{1 - x^2}} g_1(\sqrt{q}U_2(x))] \\
    &+ \mathbb{E}[\phi''(\sqrt{q}Z_1) 1_{U_2(x) <0} \sqrt{q}(Z_1 - \frac{x }{\sqrt{1 - x^2}} Z_2)g'_1(\sqrt{q}U_2(x))].
\end{align*}

After simplification, it is easy to see that $G'(x) = \frac{1}{\sqrt{1 - x^2}} G_1(x)$ where $G_1 \in \mathcal{C}^1$. By extending the same analysis to the second term of $f''$, we conclude that there exists $g \in \mathcal{C}^1$ such that $f^{(3)}(x) = \frac{1}{\sqrt{1 - x^2}} g(x)$. \\

Let us now derive a Taylor expansion of $f$ near 1. Since $f^{(3)}$ is potentially non defined at 1, we use the change of variable $x = 1 - t^2$ to compensate this effect. Simple algebra shows that the function $t \rightarrow f(1 - t^2)$ has a Taylor expansion near 0

\begin{align*}
    f(1 - t^2) = 1 - t^2 f'(1) + \frac{t^4}{2} f''(1) + O(t^5).
\end{align*}
Therefore,
\begin{equation*}
    f(x) = 1 + (x-1) f'(1) + \frac{(x-1)^2}{2} f''(1) + O((x-1)^{5/2}).
\end{equation*}
Note that this expansion is weaker than the expansion when $\phi \in \mathcal{D}^3_g$.

Denote $\lambda_l := 1 - c^l$, we have 
\begin{align*}
\lambda_{l+1} &= \lambda_l - \frac{\lambda_l^2}{\beta_q} + O(\lambda_l^{5/2}) 
\end{align*}
therefore,
\begin{align*}
\lambda_{l+1}^{-1} &= \lambda_l^{-1}(1 - \frac{\lambda_l}{\beta_q} + O(\lambda_l^{3/2}))^{-1} \\
&= \lambda_l^{-1}(1 + \frac{\lambda_l}{\beta_q} + O(\lambda_l^{3/2})) \\
&= \lambda_l^{-1} + \frac{1}{\beta_q} + O(\lambda_l^{1/2}). \\
\end{align*}
By summing (divergent series), we conclude that $\lambda_l^{-1} \sim \frac{l}{\beta_q}$.
\end{proof}
\medskip
\medskip

\begin{prop2}\label{prop:eoc_approximation}
Let $\phi \in \mathcal{A}$ be a non-linear activation function such that $\phi(0)=0$, $\phi'(0)\neq0$. Assume that $V[\phi]$ is non-decreasing  and $V[\phi'])$ is non-increasing, and let $\sigma_{max}>0$ be defined as in Proposition \ref{prop:existence_eoc}. Define the gradient with respect to the $l^{th}$ layer by $\frac{\partial E}{\partial y^l} = (\frac{\partial E}{\partial y^l_i})_{1\leq i\leq N_l}$ and let $\Tilde{Q}^l_{ab} = \mathbb{E}[\frac{\partial E}{\partial y^l_a} ^{T} \frac{\partial E}{\partial y^l_b}]$ denote the covariance matrix of the gradients during backpropagation. Recall that $\beta_q = \frac{2 \mathbb{E}[\phi'(\sqrt{q}Z)^2]}{q\mathbb{E}[\phi''(\sqrt{q} Z)^2]}$.

Then, for any $\sigma_b<\sigma_{max}$, by taking $(\sigma_b, \sigma_w) \in EOC$ we have
\begin{itemize}
    \item $\sup_{x \in [0,1]} |f(x) - x| \leq \frac{1}{\beta_q} $
    \item For $l \geq 1$, $\huge| \frac{\Tr(\Tilde{Q}^l_{ab})}{\Tr(\Tilde{Q}^{l+1}_{ab})} - 1\huge| \leq \frac{2}{\beta_q} $
\end{itemize}
Moreover, we have
\begin{equation*}
    \lim_{\substack{\sigma_b \rightarrow 0 \\ (\sigma_b, \sigma_w) \in \textrm{EOC}}} \beta_q = \infty.
\end{equation*}
\end{prop2}

To prove this result, let us first prove a more general result.

\begin{prop2}[How close is $f$ to the identity function?]
\label{prop:distance_from_identity}
Let $\phi \in \mathcal{D}^2(\mathbb{R}, \mathbb{R})-\{0\}$ and $(\sigma_b, \sigma_w) \in D_{\phi, var}$ with $q$ the corresponding limiting variance. Then,
\begin{equation*}
    \sup_{x \in [0,1]}|f(x) - x| \leq |\sigma_w^2 \mathbb{E}[\phi'(\sqrt{q}Z)^2] - 1| +  \frac{\sigma_w^2}{2} q\mathbb{E}[\phi''(\sqrt{q}Z)^2]
\end{equation*}
\end{prop2}

\begin{proof}
Using a second order Taylor expansion, we have for all $s \in [0,1]$
\begin{equation*}
    |f(x) - f(1) - f'(1)(x-1)| \leq \frac{(1-x)^2}{2} \sup_{\theta \in [0,1]} |f''(\theta)|.
\end{equation*}
We have $f(1)=1$. Therefore $|f(x) - x| \leq (1-x)  |f'(1) - 1|  + \frac{(1-x)^2}{2} \sup_{\theta \in [0,1]} |f''(\theta)|$. \\
For $\theta \in [0,1]$, we have 
\begin{align*}
    f''(\theta) &= \sigma_w^2 q \mathbb{E}[\phi''(\sqrt{q}Z_1) \phi''(\sqrt{q}U_2(\theta))]\\
                &\leq \sigma_w^2 q \mathbb{E}[\phi''(\sqrt{q}Z)^2]\\
                &= \frac{\sigma_w^2}{2} q \mathbb{E}[\phi''(\sqrt{q}Z)^2]
\end{align*}
using Cauchy-Schwartz inequality.
\end{proof}

As a result, for $\phi \in \mathcal{D}^2(\mathbb{R}, \mathbb{R})-\{0\}$ and $(\sigma_b, \sigma_w) \in EOC$ with $q$ the corresponding limiting variance, we have
\begin{equation*}
    \sup_{x \in [0,1]}|f(x) - x| \leq \frac{q\mathbb{E}[\phi''(\sqrt{q}Z)^2]}{2\mathbb{E}[\phi'(\sqrt{q}Z)^2]} = \frac{1}{\beta_q}
\end{equation*}
which is the first result of Proposition \ref{prop:eoc_approximation}. \\

Now let us prove the second result for gradient backpropagation, we show that under some assumptions, our results of forward information propagation generalize to the back-propagation of the gradients. Let us first recall the results in \cite{samuel} (we use similar notations hereafter).\\

Let $E$ be the loss we want to optimize. The backpropagation process is given by the equations
\begin{align*}
    \frac{\partial E}{\partial W^l_{ij}} &= \delta^l_i \phi(y_j^{l-1})\\
    \delta^l_i = \frac{\partial E}{\partial y^l_i} &= \phi'(y^l_i) \sum_{j=1}^{N_{l+1}} \delta^{l+1}_j W^{l+1}_{ji}.
\end{align*}
Although $\delta^l_i$ is non Gaussian (unlike $y^l_i$), knowing how $\Tilde{q}^l_{a} = \mathbb{E}[(\delta^l_i)^2]$ changes back through the network will give us an idea about how the norm of the gradient changes. Indeed, following this approach, and using the approximation that the weights used during forward propagation are independent from those used for backpropagation, \cite{samuel} showed that 
\begin{equation*}
    \Tilde{q}^l_a = \Tilde{q}^{l+1}_a \frac{N_{l+1}}{N_l} \chi_1
\end{equation*}
where $\chi_1 = \sigma_w^2 \mathbb{E}[\phi'(\sqrt{q}Z)^2]$. \\

Considering a constant width network, authors concluded that $\chi_1$ controls also the depth scales of the gradient norm, i.e. $\Tilde{q}^l_a = \Tilde{q}^L_a e^{-(L-l/\xi_{\Delta})}$ where $\xi^{-1}_{\Delta} = - \log(\chi_1)$. So in the ordered phase, gradients can propagate to a depth of $\xi_{\Delta}$ without being exponentially small, while in the chaotic phase, gradient explode exponentially. On the EOC ($\chi_1 = 1$), the depth scale is infinite so the gradient information can also propagate deeper without being exponentially small.\\
The following result shows that our previous analysis on the EOC extends to the backpropagation of gradients, and that we can make this propagation better by choosing a suitable activation function and an initialization on the EOC. We use the following approximation to ease the calculations: the weights used in forward propagation are independent from those used in backward propagation.

\begin{prop2}[Better propagation for the gradient]
Let $a$ and $b$ be two inputs and $(\sigma_b, \sigma_w) \in D_{\phi, var}$ with $q$ the limiting variance. We define the covariance between the gradients with respect to layer $l$ by $\Tilde{q}^l_{ab} = \mathbb{E}[\delta^l_i(a) \delta^l_i(b)]$. Then, we have 
\begin{equation*}
    \huge|\frac{\Tilde{q}^l_{ab}}{\Tilde{q}^{l+1}_{ab}} \times \frac{N_{l}}{N_{l+1}} - 1 \huge| \leq |\sigma_w^2 \mathbb{E}[\phi'(\sqrt{q}Z)^2] - 1| + (1 - c^l_{ab}) \sigma_w^2 q \mathbb{E}[\phi''(\sqrt{q}Z)^2] \rightarrow_{\sigma_b \rightarrow 0} 0.
\end{equation*}
\end{prop2}

\begin{proof}
We have 
\begin{align*}
    \Tilde{q}^l_{ab} &= \mathbb{E}[\delta^l_i(a) \delta^l_i(b)]\\
    &= \mathbb{E}[\phi'(y^l_i(a)) \phi'(y^l_i(b)) \sum_{j=1}^{N_{l+1}} \delta^{l+1}_j(a) W^{l+1}_{ji} \sum_{j=1}^{N_{l+1}} \delta^{l+1}_j(b) W^{l+1}_{ji}]\\
    &= \mathbb{E}[\phi'(y^l_i(a)) \phi'(y^l_i(b))] \times \mathbb{E}[\delta^{l+1}_j(a)\delta^{l+1}_j(b)] \times \mathbb{E} [\sum_{j=1}^{N_{l+1}} (W^{l+1}_{ji})^2]\\
    &\approx \Tilde{q}^{l+1}_{ab} \frac{N_{l+1}}{N_{l}} \sigma_w^2 \mathbb{E}[\phi'(\sqrt{q}Z_1)\phi'(\sqrt{q}U_2(c^l_{ab}))]\\
    &=\Tilde{q}^{l+1}_{ab} \frac{N_{l+1}}{N_{l}} f'(c^l_{ab}).
\end{align*}
We conclude using the fact that $|f'(x) - 1| \leq |f'(1)-1| + (1-x) f''(1)$
\end{proof}
The dependence in the width of the layer is natural since it acts as a scale for the covariance. We define the gradient with respect to the $l^{th}$ layer by $\frac{\partial E}{\partial y^l} = (\frac{\partial E}{\partial y^l_i})_{1\leq i\leq N_l}$ and let $\Tilde{Q}^l_{ab} = \mathbb{E}[\frac{\partial E}{\partial y^l_a} ^{T} \frac{\partial E}{\partial y^l_b}]$ denote the covariance matrix of the gradients during backpropagation. Then, on the EOC, we have 
\begin{equation*}
    \huge| \frac{\Tr(\Tilde{Q}^l_{ab})}{\Tr(\Tilde{Q}^{l+1}_{ab})} - 1\huge| \leq (1 - c^l_{ab}) \frac{q \mathbb{E}[\phi''(\sqrt{q}Z)^2]}{ \mathbb{E}[\phi'(\sqrt{q}Z)^2]} \leq \frac{2}{\beta_q}.
\end{equation*}

So again, the quantity $|\phi|_{EOC}$ controls the vanishing of the covariance of the gradients during backpropagation. This was expected because linear activation functions do not change the covariance of the gradients.

\section{Further theoretical results}

\subsection{Results on the Edge of Chaos}

The next lemma shows that under some conditions, the EOC does not include couples $(\sigma_b, \sigma_w)$ with small $\sigma_b>0$. 

\begin{lemma2}[Trivial EOC]
\label{lemma:trivial_eoc}
Assume there exists $M>0$ such that $\mathbb{E}[\phi''(xZ) \phi(xZ)] > 0$ for all $x \in ]0,M[$. Then, there exists $\sigma>0$ such that $EOC \cap ([0,\sigma) \times \mathbb{R}^+) = \{ (0 , \frac{1}{|\phi'(0)|} )\}$. Moreover, if $M =\infty$ then $EOC = \{ (0 , \frac{1}{|\phi'(0)|} )\}$.
\end{lemma2}
Activation functions that satisfy the conditions of Lemma \ref{lemma:trivial_eoc} cannot be used with small $\sigma_b >0$ (note that using $\sigma_b = 0$ would lead to $q=0$ which is not practical for the training), therefore, the result of Proposition \ref{prop:eoc_approximation} do not apply in this case. However, as we will see hereafter, SiLU (a.k.a Swish) has a partial EOC, and still allows better information propagation (Proposition \ref{prop:rate_smooth_functions}) compared to ReLU even if $\sigma_b$ not very small. 

\begin{proof}
It is clear that $(0 , \frac{1}{|\phi'(0)|} ) \in EOC$. For $\sigma_b>0$ we denote by $q$ the smallest fixed point of the function $\sigma_b^2 + \frac{V[\phi]}{V[\phi']}$ (which is supposed to be the limiting variance on the EOC). Using the condition on $\phi$ and the fact that $\lim_{\sigma_b \rightarrow 0} q = 0$, there exists $\sigma>0$ such that for $\sigma_b < \sigma$ we have $\mathbb{E}[\phi''(\sqrt{q}Z)\phi(\sqrt{q}Z)] >0$. Now let us prove that for $\sigma_b \in ]0,\sigma[$, the limiting variance does not satisfy the EOC equation. \\
Let $t_{max} = \sqrt{\sup_{x>0}|x - \frac{V[\phi]}{V[\phi']}|}$ and $\sigma_b \in ]0, \min(t_{max}, \sigma) [$. Recall that for all $x\geq 0$ we have that
\begin{equation*}
    F'(x) = \sigma_w^2 (\mathbb{E}[\phi'(\sqrt{x}Z)^2] + \mathbb{E}[\phi''(\sqrt{x}Z)\phi(\sqrt{x}Z)])
\end{equation*}
 Using $\sigma_w^2 = 1/V[\phi'](q)$ (EOC equation) we have that $F'(q) = 1 + \sigma_w^2 \mathbb{E}[\phi''(\sqrt{q}Z)\phi(\sqrt{q}Z)]) > 1$. Therefore, the function $\sigma_b^2 + \frac{1}{V[\phi'](q)} V[\phi]$ crosses the identity in a point $\hat{q} < q$, hence $(\sigma_b, \sigma_w) \not\in D_{\phi, var}$. Therefore, for any $\sigma_b \in ]0,\sigma[$, there is no $\sigma_w$ such that $(\sigma_b, \sigma_w) \in EOC$. \\
 
 If $M = \infty$, the previous analysis is true for any $\sigma>0$, by taking the limit $\sigma \rightarrow \infty$, we conclude. 
\end{proof}

This is true for activations such as Shifted Softplus (a shifted version of Softplus in order to have $\phi(0) = 0$) and SiLU (a.k.a Swish).
\begin{corollary}\label{cor:functions_without_eoc}
$EOC_{SSoftplus} = \{ (0 , 2)\}$ and there exists $\sigma>0$ such that $EOC_{SiLU} \cap ([0,\sigma[ \times \mathbb{R}^+) = \{ (0 , 2 )\}$
\end{corollary}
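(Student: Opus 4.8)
This is a direct application of Lemma \ref{lemma:trivial_eoc} to each of the two activations, so the plan is simply to verify its hypotheses. Writing $s(x) = (1+e^{-x})^{-1}$ for the logistic sigmoid, I first note in both cases that $\phi'(0) = 1/2$, so that $\frac{1}{|\phi'(0)|} = 2$, which identifies the candidate point $(0,2)$; it then remains to control the sign of $\mathbb{E}[\phi''(xZ)\phi(xZ)]$ on an appropriate interval $]0,M[$ and invoke the corresponding ($M=\infty$ or finite $M$) conclusion of the lemma.

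For Shifted Softplus, $\phi(x) = \log\frac{1+e^x}{2}$, I would use the decomposition $\phi(x) = \frac{x}{2} + \log\cosh\frac{x}{2}$, obtained by factoring $e^{x/2}$ inside the logarithm. Here $\phi''(x) = s'(x) = s(x)(1-s(x))$ is strictly positive and even, while the term $\frac{x}{2}$ is odd and $\log\cosh\frac{x}{2}$ is even and strictly positive away from the origin. Since $Z$ is symmetric, the odd part contributes nothing to the expectation, leaving $\mathbb{E}[\phi''(xZ)\phi(xZ)] = \mathbb{E}[s'(xZ)\log\cosh\frac{xZ}{2}] > 0$ for every $x>0$. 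Thus the hypothesis of Lemma \ref{lemma:trivial_eoc} holds with $M=\infty$, and the lemma yields $EOC_{SSoftplus} = \{(0,2)\}$.

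For SiLU, $\phi(x) = x\,s(x)$, the global positivity fails because $\phi''(x) = s'(x)\bigl[2 + x(1-2s(x))\bigr]$ changes sign for large $|x|$; this is precisely why only the local (small $\sigma_b$) statement can hold. Near the origin I would instead argue by Taylor expansion: from $s(y) = \frac12 + \frac{y}{4} - \frac{y^3}{48} + O(y^5)$ one gets $\phi(y) = \frac{y}{2} + \frac{y^2}{4} + O(y^4)$ and $\phi''(y) = \frac12 - \frac{y^2}{4} + O(y^4)$, hence $\phi''(y)\phi(y) = \frac{y}{4} + \frac{y^2}{8} + O(y^3)$. Substituting $y = xZ$ and taking expectations, the odd moments vanish and the leading surviving term is $\frac{x^2}{8}\mathbb{E}[Z^2] = \frac{x^2}{8}$, so $\mathbb{E}[\phi''(xZ)\phi(xZ)] = \frac{x^2}{8} + O(x^4) > 0$ for all sufficiently small $x>0$. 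This verifies the hypothesis of Lemma \ref{lemma:trivial_eoc} with a finite $M$, and its first conclusion produces a $\sigma>0$ with $EOC_{SiLU}\cap([0,\sigma[\times\mathbb{R}^+) = \{(0,2)\}$.

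The derivative formulas and Taylor coefficients are routine; the one genuinely delicate point is the SiLU case, where I must justify exchanging the small-$x$ expansion with the Gaussian expectation. The cleanest route is to carry an explicit integral remainder in the Taylor formula and bound it uniformly, using the polynomial growth of $\phi$ and $\phi''$ together with finiteness of Gaussian moments, so that the $O(x^4)$ error is genuine and cannot overwhelm the $\frac{x^2}{8}$ main term for small $x$.
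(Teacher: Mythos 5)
Your proof is correct, and it splits naturally into two halves relative to the paper's. For Shifted Softplus your even/odd decomposition $\phi(x)=\tfrac{x}{2}+\log\cosh\tfrac{x}{2}$ is essentially the paper's argument in disguise: the paper symmetrizes the expectation over $Z>0$ and $Z<0$ and uses $sp(y)+sp(-y)=\log\frac{2+e^y+e^{-y}}{4}>0$, which is exactly $2\log\cosh\frac{y}{2}$, so the two computations coincide. For SiLU, however, you take a genuinely different route. The paper again symmetrizes globally, reducing the expectation to $\mathbb{E}[1_{Z>0}G(xZ)]$ with $G(y)=ys(y)(1-s(y))(2+y(1-2s(y)))(2s(y)-1)$, observes that only the factor $2+y(1-2s(y))$ changes sign (positive for small $y$, negative for large $y$), and concludes positivity for small $x$ somewhat tersely; your local Taylor expansion $\mathbb{E}[\phi''(xZ)\phi(xZ)]=\tfrac{x^2}{8}+O(x^4)$ with an explicit remainder bound is arguably more self-contained, since it makes precise why the negative tail contribution cannot compete with the $\tfrac{x^2}{8}$ main term (the paper's version implicitly needs the same Gaussian-tail estimate to rule out the region where $G<0$). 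Your stated plan for the remainder is sound: $\phi''\phi$ grows at most linearly while the discarded cubic Taylor polynomial grows cubically, so the error is $O(x^4)$ on $\{|xZ|\le 1\}$ and superpolynomially small on the complement. The identification of the candidate point via $\phi'(0)=1/2$ in both cases matches the paper.
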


\begin{proof}
let $s(x)=\frac{1}{1+e^{-x}}$ for all $x \in \mathbb{R}$ (sigmoid function).
\begin{enumerate}
    \item Let $sp(x) = \log(1+e^x) - \log(2)$ for $x \in \mathbb{R}$ (Shifted Softplus). We have $sp'(x) = s(x)$ and $sp''(x) = s(x)(1 - s(x))$. For $x>0$ we have
    \begin{align*}
        \mathbb{E}[sp''(xZ)sp(xZ)] &= \mathbb{E}[s(xZ)(1 - s(xZ))sp(xZ)]\\
                                    &= \mathbb{E}[1_{Z>0}(s(xZ)(1 - s(xZ))sp(xZ))] + \mathbb{E}[1_{Z<0}(s(xZ)(1 - s(xZ))sp(xZ))]\\
                                    &= \mathbb{E}[1_{Z>0}(s(xZ)(1 - s(xZ))sp(xZ))] + \mathbb{E}[1_{Z<0}(s(xZ)(1 - s(xZ))sp(-xZ))]\\
                                    &= \mathbb{E}[1_{Z>0}(s(xZ)(1 - s(xZ))(sp(xZ)+sp(-xZ)))]>0,\\
    \end{align*}
    where we have used the fact that $sp(y) + sp(-y) = \log(\frac{2 + e^y + e^{-y}}{4}) > 0$ for all $y>0$. We conclude using Lemma \ref{lemma:trivial_eoc}.
    
    \item Let $si(x) = x s(x)$ (SiLU activation function, known also as Swish). We have $si'(x) = s(x) + xs(x)(1-s(x))$ and $si''(x) = s(x)(1-s(x))(2 + x(1 - 2s(x)))$. Using the same technique as for SSoftplus, we have for $x>0$
    \begin{align*}
        \mathbb{E}[si''(xZ)si(xZ)] &= \mathbb{E}[xZ \times s(xZ)^2\times(1-s(xZ))(2 + xZ(1-2))]\\
        &= \mathbb{E}[1_{Z>0} G(xZ)],\\
    \end{align*}
    where $G(y) = y s(y) ( 1 -s(y)) (2 + y(1-2s(y)))(2s(y)-1)$. The only term that changes sign is $(2 + y(1-2s(y)))$. It is positive for small $y$ and negative for large $y$. We conclude that there $M>0$ such that $\mathbb{E}[si''(xZ)si(xZ)]>0$ for $x \in ]0,M[$.
    
\end{enumerate}
\end{proof}

\subsection{Beyond the Edge of Chaos}
\label{subsec:beyond_eoc}

Can we make the distance between $f$ and the identity function small independently from the choice of $\sigma_b$? The answer is yes if we select the right activation function. Let us first define a semi-norm on $\mathcal{D}^2(\mathbb{R}, \mathbb{R})$.

\begin{definition}[EOC semi-norm]
The semi-norm $|.|_{EOC}$ is defined on $\mathcal{D}^2(\mathbb{R}, \mathbb{R})$ by $|\phi|_{EOC} = \sup_{y \in \mathbb{R}^+} \frac{y\mathbb{E}[\phi''(\sqrt{y}Z)^2]}{\mathbb{E}[\phi'(\sqrt{y}Z)^2]}$.\\
$|.|_{EOC}$ is a norm on the quotient space $\mathcal{D}^2(\mathbb{R}, \mathbb{R}) / \mathcal{L}(\mathbb{R})$ where $\mathcal{L}(\mathbb{R})$ is the space of linear functions.
\end{definition}

When $|\phi|_{EOC}$ is small, $\phi$ is close to a linear function, which implies that the function $\frac{V[\phi]}{V[\phi']}$ defined on $\mathbb{R}^+$ is close to the identity function. Thus, for a fixed $\sigma_b$, we expect $q$ to become arbitrarily big when $|\phi|_{EOC}$ goes to zero.
\begin{customlemma}{2.1}\label{lemma:limit_q_phi_linear}
Let $(\phi_{n})_{n \in \mathbb{N}}$ be a sequence of functions such that $\lim_{n \rightarrow \infty} |\phi_n|_{EOC} = 0$. Let $\sigma_b>0$ and assume that for all $n \in \mathbb{N}$ there exists $\sigma_{w,n}$ such that $(\sigma_w,\sigma_{w,n}) \in EOC$. Let $q_n$ be the limiting variance. Then $\lim_{n \rightarrow \infty} q_n = \infty$
\end{customlemma}
\begin{proof}
The proof is straightforward knowing that $f(0) \leq \frac{1}{2} |\phi_n|_{EOC}$, which implies that $\frac{\sigma_b^2}{q} \leq \frac{1}{2} |\phi_n|_{EOC}$.
\end{proof}
\begin{customcor}{2.1}
\label{cor:distance_from_identity_eoc_for_any_sigmab}
Let $\phi \in \mathcal{D}^2(\mathbb{R}, \mathbb{R})-\{0\}$ and $(\sigma_b, \sigma_w) \in EOC$ with $q$ the corresponding limiting variance. Then,
\begin{equation*}
    \sup_{x \in [0,1]}|f(x) - x| \leq \frac{1}{2} |\phi|_{EOC}.
\end{equation*}
\end{customcor}

Corollary 2.1 shows that by taking an activation function $\phi$ such that $|\phi|_{EOC}$ is small and by initializing the network on the EOC, the correlation function is close to the identity function, i.e., the signal propagates deeper through the network. However, note that there is a trade-off to take in account here: we loose expressiveness by taking $|\phi|_{EOC}$ too small, because this would imply that $\phi$ is close to a linear function. So there is a trade-off between signal propagation and expressiveness We check this finding with activation functions of the form $\phi_{\alpha}(x) = x + \alpha \text{Tanh}(x)$. Indeed, we have $|\phi_{\alpha}|_{EOC} \leq \alpha^2 \sup_{y \in \mathbb{R}^+} \mathbb{E}[\text{Tanh}''(\sqrt{x} Z)^2] \rightarrow_{\alpha \rightarrow 0} 0$. So by taking small $\alpha$, we would theoretically provide deeper signal propagation. However, note that we loose expressiveness as $\alpha$ goes to zero because $\phi_{\alpha}$ becomes closer to the identity function. So There is also a trade-off here. The difference with Proposition \ref{prop:eoc_approximation} is that here we can compensate the expressiveness issue by adding more layers (see e.g. \cite{montufar} who showed that expressiveness grows exponentially with depth).\\

\section{Experiments}

\subsection{Training with RMSProp}
For RMSProp, the learning rate $10^{-5}$ is nearly optimal for networks with depth $L\leq200$ (for deeper networks, $10^{-6}$ gives better results). This learning rate was found by a grid search with exponential step of size 10. \\
Figure \ref{fig:rmsprop} shows the training curves of ELU, ReLU and Tanh on MNIST for a network with depth 200 and width 300. Here also, ELU and Tanh perform better than ReLU. This confirms that the result of Proposition \ref{prop:rate_smooth_functions} is independent of the training algorithm.
\\
ELU has faster convergence than Tanh. This could be explained by the saturation problem of Tanh.
\begin{figure}
\begin{center}
\includegraphics[scale=0.4]{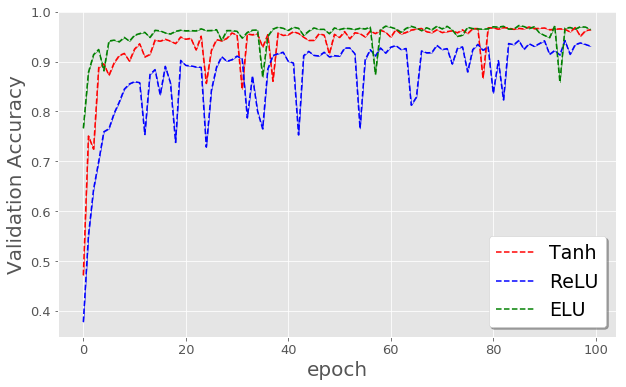}
\caption{100 epochs of the training curves of ELU, ReLU and Tanh networks of depth 200 and width 300 on MNIST with RMSProp}
\label{fig:rmsprop}
\end{center}
\end{figure}

\subsection{Training with activation $\phi_{\alpha}(x) = x + \alpha \text{Tanh}(x)$}
As we have already mentioned, $\phi_{\alpha}$ satisfies all conditions of Proposition \ref{prop:rate_smooth_functions}. Therefore, we expect it to perform at least better than ReLU for deep neural networks. Figure \ref{fig:phi_alpha} shows the training curve for width 300 and depth 200 with different activation functions. $\phi_{0.5}$ has approximately similar performance as ELU and better than Tanh and ReLU. Note that $\phi_{\alpha}$ does not suffer form saturation of the gradient, which could explain why it performs better than Tanh.

\subsection{Impact of $\phi''(0)$}
Since we usually take $\sigma_b$ small on the EOC, then having $\phi''(0)=0$ would make the coefficient $\beta_q$ even bigger. We test this result on SiLU (a.k.a Swish) for depth 70. SiLU is defined by 
$$\phi_{SiLU}(x) = x~\text{sigmoid}(x)$$
we have $\phi''(0) = 1/2$. consider a modified SiLU (MSiLU) defined by 
$$\phi_{MSiLU}(x) = x~\text{sigmoid}(x) + (e^{-x^2} - 1)/4$$
We have $\phi_{MSiLU}''(0)=0$.\\
Figure \ref{fig:swishplusexp} shows the the training curves (test accuracy) of SiLU and MSiLU on MNIST with SGD. MSiLU performs better than SiLU, expecially at the beginning of the training. 

\begin{figure}
\begin{center}
\includegraphics[scale=0.4]{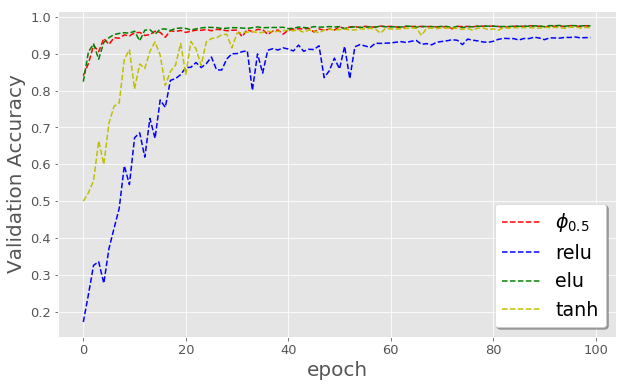}
\caption{100 epochs of the training curves of ELU, ReLU, Tanh and $\phi_{0.5}$ networks of depth 200 and width 300 on MNIST with SGD}
\label{fig:phi_alpha}
\end{center}
\end{figure}

\begin{figure}
\begin{center}
\includegraphics[scale=0.4]{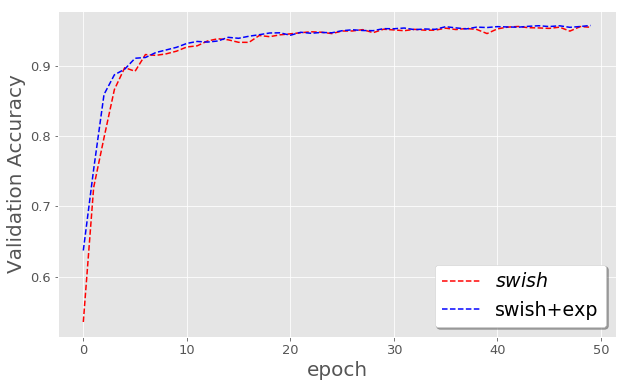}
\caption{50 epochs of the training curves of SiLU and MSiLU on MNIST with SGD}
\label{fig:swishplusexp}
\end{center}
\end{figure}

\end{document}